\documentclass[journal,10pt]{IEEEtran}

\usepackage{amsmath,amssymb,amsfonts,amsthm}
\usepackage{graphicx}
\usepackage{booktabs}
\usepackage{array}
\usepackage{multirow}
\usepackage{algorithm}
\usepackage{algpseudocode}
\usepackage{tikz}
\usepackage{xcolor}
\usepackage{url}
\usepackage[hidelinks]{hyperref}

\newtheorem{theorem}{Theorem}
\newtheorem{lemma}[theorem]{Lemma}
\newtheorem{proposition}[theorem]{Proposition}
\newtheorem{corollary}[theorem]{Corollary}
\theoremstyle{definition}
\newtheorem{definition}[theorem]{Definition}
\newtheorem{assumption}[theorem]{Assumption}
\theoremstyle{remark}
\newtheorem{remark}[theorem]{Remark}

\newcommand{\Hc}{\mathcal{H}}
\newcommand{\Xc}{\mathcal{X}}
\newcommand{\Yc}{\mathcal{Y}}
\newcommand{\Dc}{\mathcal{D}}
\newcommand{\Cc}{\mathcal{C}}
\newcommand{\Risk}{R}
\newcommand{\Rstar}{R^{\star}}
\newcommand{\Hblind}[1]{\Hc^{\perp #1}}
\newcommand{\Hsig}[1]{\Hc^{#1}}
\newcommand{\price}{\mathcal{P}}
\newcommand{\head}{\Delta}
\newcommand{\VR}{\mathrm{VR}}
\newcommand{\Fone}{F_1}
\newcommand{\MAJ}{\textsf{MAJORITY}}
\newcommand{\GEO}{\textsf{GEO\_ONLY}}
\newcommand{\WX}{\textsf{WX\_ONLY}}
\newcommand{\MC}{\textsf{MC}}
\newcommand{\MONOnv}{\textsf{MONO\_naive}}
\newcommand{\MONOsc}{\textsf{MONO\_sc}}
\newcommand{\PHYSnv}{\textsf{PHYS}}
\newcommand{\PHYSsc}{\textsf{PHYS\_sc}}
\newcommand{\MIB}{\textsf{MIB}}
\newcommand{\MIBfp}{\textsf{MIB\_fp}}
\newcommand{\MIBfm}{\textsf{MIB\_fm}}
\DeclareMathOperator*{\argmin}{arg\,min}

\begin{document}

\title{The Cost of a Physics Prior Is Bounded \newline by the Ablation Gap}

\author{Boris~Kriuk%
\thanks{B.~Kriuk is with the Department of Computer Science and Engineering,
The Hong Kong University of Science and Technology (HKUST), Clear Water Bay,
Kowloon, Hong Kong SAR, China (e-mail: bkriuk@connect.ust.hk).}%
\thanks{Code, run configurations and raw artefacts:
\url{https://github.com/BorisKriuk/The-Cost-of-a-Physics-Prior-Is-Bounded-by-the-Ablation-Gap}.}}

\markboth{}{}

\IEEEtitleabstractindextext{%
\begin{abstract}
Physics-informed and shape-constrained machine learning routinely reports an
\emph{accuracy cost of enforcing a prior} and treats that number as a property of
the prior. We show it is primarily a property of the free features and the
validation split. Let $\price$ be the excess risk of restricting a hypothesis
class to functions obeying a shape constraint on a feature subset $S$, and let
$\head$ be the excess risk of the \emph{ablated} model that ignores $S$. Because a
function constant in $x_j$ is simultaneously non-decreasing and non-increasing in
$x_j$, the ablated class is contained in the constrained class, so
$0\le\price\le\head$ for every risk functional, with no convexity, smoothness or
realizability assumption. In the language of an experiment, the bound collapses to
a single sign test: \emph{a constrained model must never be beaten by its own
ablation}. We instantiate this on an ordinal wildfire-severity task
($N=26\,681$, $K=3$) with hard monotone constraints on four meteorological
drivers, geographic coordinates left free, and a three-rung validation ladder from
i.i.d.\ resampling to $2^\circ$ spatial blocking, across a nine-configuration
grid. Four consequences are non-obvious. First, coordinates act as a
\emph{shield}: alone they recover $92.9\%$ of the full model's macro-$\Fone$ under
spatial blocking, which collapses $\head$ from $0.1288$ to $0.0427$ and forces
every prior on the drivers to look cheap; the same screened prior costs $0.0473$
shielded and $0.3470$ unshielded, a ratio of $7.3$ with identical physics.
Second, because $\head$ is protocol-dependent it is not transferable: coarsening
blocks from $1^\circ$ to $10^\circ$ drives $\head$ from $0.0942$ to $0.0050$, so
two of our spatial configurations were unidentifiable before any constrained model
was trained. Third, empirical inversions of the certified nesting bound the
pipeline's additive resolution in the reported metric, and maximizing over $318$
certified comparisons yields a self-calibrating floor $\hat s=0.0220$
macro-$\Fone$, below which no price cell can be interpreted --- including four of
twenty-one in our own headline configuration. Fourth, constraint cost and
constraint compliance are logically independent: the unconstrained model violates
the textbook prior at rate $0.48$--$0.49$ while enforcing it costs $0.0473$. We
close with a two-fit screen that rejects unidentifiable constraint-cost
experiments before any constrained model is trained.
\end{abstract}

\begin{IEEEkeywords}
Shape constraints, monotone classification, ordinal regression, physics-informed
machine learning, inductive bias, spatial cross-validation, ablation analysis,
measurement resolution, wildfire severity.
\end{IEEEkeywords}}

\maketitle
\IEEEdisplaynontitleabstractindextext
\IEEEpeerreviewmaketitle

\section{Introduction}
\IEEEPARstart{A}{} recurring experiment in scientific machine learning proceeds as
follows. A practitioner holds a model and a piece of domain knowledge: fire
severity should not decrease with temperature, nor increase with relative
humidity; permafrost fraction should fall as air temperature rises; aftershock
intensity should decay with elapsed time. The knowledge is imposed as a hard shape
constraint or a soft penalty, the model is retrained, and the change in a held-out
metric is reported as \emph{the cost of the prior}. When small, the prior is
declared essentially free, and the conclusion is generalized: physics can be
respected at negligible expense~\cite{karniadakis2021,willard2022,raissi2019}.

Such claims appear throughout the physics-informed literature and its
application-facing branches, including our own earlier hybrid physics--ML risk
model for pan-Arctic permafrost infrastructure, which blends learned
climate--permafrost relationships with a prescribed physical sensitivity in a
fixed mixture to preserve extrapolative behaviour~\cite{kriuk2026permafrost};
and POSEIDON, which embeds Gutenberg--Richter and Omori--Utsu scaling as learnable
constraints and reports that the physical parameters land in scientifically
plausible ranges while enhancing rather than compromising predictive
accuracy~\cite{kriuk2026poseidon}. Parallel claims are standard in the
monotonicity-constrained tabular
literature~\cite{gupta2016,you2017,liu2020,cano2019}. The reported quantity is
typically of the form ``$\Delta \Fone = -0.01$'' or ``$R^2$ unchanged to three
decimals.''

This paper makes a structural observation that changes how such numbers must be
read, and then --- more importantly --- shows that the observation is already
computed by any study that runs an ablation baseline.

Consider a constraint requiring $f$ to be non-decreasing in $x_j$. A function that
does not depend on $x_j$ is non-decreasing in $x_j$ --- weakly, but the constraint
is stated weakly --- and is therefore \emph{admissible}. The same function is also
non-increasing in $x_j$, hence admissible under the opposite prior, and admissible
under convexity, concavity, Lipschitz bounds and unimodality in $x_j$. So the worst
a shape constraint on $S$ can do is force the learner to discard $S$
(Fig.~\ref{fig:nesting}):
\begin{equation}
\underbrace{\price(\sigma)}_{\text{price of the prior on }S}
\;\le\;
\underbrace{\head(S)}_{\text{value of }S\text{ given the rest}} .
\label{eq:teaser}
\end{equation}

\begin{figure}[!t]
\centering
\resizebox{\columnwidth}{!}{%
\begin{tikzpicture}[scale=0.95]
  \draw[rounded corners=26pt, fill=blue!4, draw=blue!55, thick]
        (-3.5,-1.55) rectangle (3.5,1.55);
  \draw[rounded corners=20pt, fill=orange!8, draw=orange!75!black, thick]
        (-2.55,-1.15) rectangle (2.55,1.15);
  \draw[rounded corners=13pt, fill=green!11, draw=green!45!black, thick]
        (-1.35,-0.62) rectangle (1.35,0.62);
  \node at (0,0) {\footnotesize $\Hblind{S}$};
  \node at (0,0.88) {\footnotesize $\Hsig{\sigma}$};
  \node at (0,1.34) {\footnotesize $\Hc$};
  \node[anchor=west] at (1.55,0.0)  {\scriptsize \textsf{GEO\_ONLY} (0.5602)};
  \node[anchor=west] at (2.72,0.88) {\scriptsize \textsf{MONO\_sc} (0.5850)};
  \node[anchor=west] at (3.62,1.34) {\scriptsize \textsf{MC} (0.6029)};
  \draw[<->,thick] (-1.35,-1.0) -- (-2.55,-1.0)
       node[midway,below,font=\scriptsize] {$\head-\price$};
  \draw[<->,thick] (-2.55,-1.4) -- (-3.5,-1.4)
       node[midway,below,font=\scriptsize] {$\price$};
\end{tikzpicture}}
\caption{Geometry behind Theorem~\ref{thm:main}, with the model names of our
experimental zoo attached and their measured out-of-fold macro-$\Fone$ under
$2^\circ$ spatial blocking. The class $\Hblind{S}$ of hypotheses that ignore the
constrained features $S$ lies \emph{inside} the shape-constrained class
$\Hsig{\sigma}$, because a function constant in $x_j$ satisfies every sign pattern
on $x_j$ vacuously. Hence the constrained optimum can never be worse than the
ablated optimum, and the price of the constraint is bounded by the ablation gap:
here $\price=0.0179\le\head=0.0427$. The bound holds for arbitrary risk
functionals.}
\label{fig:nesting}
\end{figure}

We do not claim \eqref{eq:teaser} is deep; its proof is three lines. We claim that
its right-hand side is a quantity constrained-learning papers \emph{already
measure}, under the name ``ablation'' or ``feature-importance baseline,'' and that
reading the two numbers together dissolves several confusions. Concretely, in our
own experimental harness the bound reduces to the statement that the column
$\Delta\Fone$ versus the ablated baseline must be non-negative for every
constrained model --- a sign check on a column the pipeline already writes.

\subsection{Contributions}
\begin{enumerate}
\item \textbf{A certified ceiling} (Theorem~\ref{thm:main},
Corollary~\ref{cor:general}). For any risk functional and any class closed under
feature ablation, the excess risk of a degenerate-admissible constraint family on
$S$ is at most the ablation gap of $S$. This covers hard constraints, penalized
objectives (Corollary~\ref{cor:soft}), monotonicity under arbitrary partial
orders, convexity, Lipschitz and unimodality constraints. It is attained
(Proposition~\ref{prop:sharp}) and so cannot be improved.

\item \textbf{An operational reformulation}
(Corollary~\ref{cor:dominance}): \emph{ablation dominance}. The bound is
equivalent to requiring each constrained model to be no worse than its own
ablation. This yields a falsifier computable from an existing results table with
no additional training.

\item \textbf{Shielding} (Proposition~\ref{prop:shield}), measured. If any
$S$-blind surrogate is near-optimal, $\head(S)\approx0$ and the constraint is
automatically almost free, \emph{regardless of whether the prior is correct}. In
our design the shield is $\{\mathrm{lat},\mathrm{lon}\}$; it alone attains
$92.9\%$ of the full model's macro-$\Fone$ under spatial blocking, and it makes
the identical monotone prior appear $7.3\times$ cheaper than when it is removed.

\item \textbf{Bounded sign controversy} (Corollary~\ref{cor:spread}). The entire
spread of achievable risk over all $2^{|S|}$ sign patterns, and over the choice of
sign for an auxiliary channel, is bounded by the same $\head$. We report the one
regime in which our measured spread exceeds the bound, and show that the excess
decomposes exactly into two elementary nesting inversions of magnitude
$\le\hat s$.

\item \textbf{A self-calibrating noise floor} (Corollary~\ref{cor:slack}).
Since \eqref{eq:teaser} holds at the population level, an empirical inversion of
any certified nesting measures the pipeline's own resolution. Maximizing over
$318$ certified comparisons in our grid gives $\hat s=0.0220$ macro-$\Fone$.
Crucially $\hat\head<\hat s$ is checkable \emph{before} any constrained model is
trained (Algorithm~\ref{alg:screen}); it fires on two of our twenty-one
configuration-protocol cells.

\item \textbf{Cost and compliance are independent}
(Proposition~\ref{prop:vr}). We construct families with $\price=0$ and violation
rate approaching $1$, and we measure the same dissociation: the unconstrained
model violates the textbook prior at rate $0.4795$--$0.4825$ while the price of
enforcing that prior is $0.0143$--$0.0549$.

\item \textbf{Empirical calibration} on an ordinal wildfire-severity task with
hard monotone constraints, four drivers, a three-rung validation ladder, an
exogeneity gate on auxiliary channels, and paired cluster bootstrap inference
across a nine-configuration grid
(Sections~\ref{sec:instantiation}--\ref{sec:results}).
\end{enumerate}

\subsection{What this paper is not}
It is not a new constrained-learning algorithm, and it does not argue that shape
constraints are useless. They buy extrapolation, auditability and regulatory
defensibility~\cite{wang2020deontological,gupta2016}, none of which
in-distribution accuracy measures. It argues that the \emph{accuracy cost} of such
constraints, as currently reported, is largely a statement about the free features
and the split.

\section{Related Work}
\subsection{Monotone and shape-constrained learning}
Monotonicity as an inductive bias goes back to
Ben-David~\cite{bendavid1995} and Sill's monotonic networks~\cite{sill1997}, and
has developed into partially monotone neural networks~\cite{daniels2010},
calibrated interpolated lookup tables~\cite{gupta2016}, deep lattice
networks~\cite{you2017}, and certification for unconstrained
architectures~\cite{liu2020}. Monotone splits are standard in production gradient
boosting~\cite{friedman2001,chen2016,ke2017}, which is what makes hard constraints
cheap to impose in our study. Cano \emph{et al.}~\cite{cano2019} survey algorithms
and evaluation measures for monotonic classification. Across this literature the
accuracy penalty is reported per dataset; to our knowledge it is never reported
alongside the ablation gap of the constrained features, which
Theorem~\ref{thm:main} shows is the only quantity that makes it interpretable.

\subsection{Physics-informed and hybrid modeling}
Physics-informed neural networks~\cite{raissi2019} and the broader programme of
embedding scientific knowledge into learning~\cite{karniadakis2021,willard2022}
motivate constraints from conservation laws and empirical scaling relations. Our
own prior work makes the ``inexpensive physics'' claim explicit in two settings:
the pan-Arctic permafrost risk framework, where learned relationships are mixed
with a prescribed physical temperature sensitivity specifically to survive
extrapolation~\cite{kriuk2026permafrost}; and POSEIDON, where seismological
scaling laws enter as learnable constraints~\cite{kriuk2026poseidon}. The present
paper is complementary rather than adversarial to both: it identifies the
additional number ($\head$, computed over the covariates that the prescribed
component touches) that turns such a statement from unfalsifiable into
informative, and the regime ($\head<\hat s$) in which it cannot be tested at all.
The same accounting applies to self-configuring learners whose inductive biases
are set by the algorithm rather than the user, such as adaptive tree
morphing~\cite{kriuk2025morphboost} and epigenetically adapted evolutionary
search~\cite{kriuk2025elena}, where the free-feature set itself moves during
training; and to the broader question of which stage of adaptation an inductive
bias belongs to~\cite{kriuk2026aai}.

\subsection{Validation protocol and spatial structure}
That random $k$-fold cross-validation inflates apparent skill on spatially
autocorrelated data is well
established~\cite{roberts2017,ploton2020,valavi2019,meyer2021,meyer2019}. The
usual conclusion is that reported accuracies are optimistic --- and we reproduce
it: our unconstrained model falls from $0.8316$ to $0.6029$ macro-$\Fone$ between
i.i.d.\ and $2^\circ$-blocked splits. We add a second-order consequence: because
protocol choice changes $\head$, it changes the \emph{admissible range of every
constraint-cost figure}, so protocol sensitivity propagates into claims about
inductive bias and interpretability, not merely into headline metrics. Related
cautions about correlation- and attention-derived structure under non-i.i.d.\
resampling appear in~\cite{kriuk2025deepsupp,kriuk2026orca}. The wildfire corpus
and its meteorological joins follow~\cite{kriuk2025eurasia}.

\section{Setup and Notation}
\label{sec:setup}
Let $\Xc=\prod_{j=1}^{d}\Xc_j$ with each $\Xc_j\subseteq\mathbb{R}$ totally
ordered, let $\Yc=\{0,\dots,K-1\}$ carry the natural order, and let $\Dc$ be a
distribution on $\Xc\times\Yc$. Write $V=\{1,\dots,d\}$ for the available
features, $S\subseteq V$ for the constrained subset, and $x_{-S}$ for the
coordinates outside $S$.

\begin{definition}[Risk functional]
A \emph{risk} is any functional $\Risk:\Hc\to\mathbb{R}\cup\{+\infty\}$ with lower
values preferred. For $\mathcal{G}\subseteq\Hc$ put
$\Rstar(\mathcal{G})=\inf_{f\in\mathcal{G}}\Risk(f)$.
\end{definition}

No structure is assumed on $\Risk$. It may be an expected loss, a negated
macro-$\Fone$, an AUC deficit, a quantile of a per-region error distribution, or a
cross-validation estimate of any of these. This generality is what makes the
result usable for the metrics applied papers actually report; our own metric,
macro-$\Fone$, is not an expected loss and is not decomposable over examples.

\begin{definition}[Ordinal scoring function]
A hypothesis is a pair $(f,\text{dec})$ where $f:\Xc\to\mathbb{R}$ is a score and
$\text{dec}$ a fixed decoding of $f$ into $\Yc$. Concretely we use the cumulative
parameterization $f(x)=\sum_{k=0}^{K-2}\hat{S}_k(x)$ with
$\hat{S}_k(x)\approx\Pr[y>k\mid x]$ and $\hat S_0\ge\cdots\ge\hat S_{K-2}$
enforced by running isotonization in $k$; the label is the argmax of the induced
class probabilities. Shape constraints are imposed on each $\hat S_k$ and hence on
$f$, and isotonization in $k$ preserves monotonicity in $x$.
\end{definition}

\begin{definition}[Sign pattern and shape class]
For $\sigma\in\{-1,0,+1\}^{d}$ with support $S(\sigma)=\{j:\sigma_j\neq0\}$,
\[
\Hsig{\sigma}=\bigl\{f\in\Hc:\ \forall j\in S,\ \forall x_{-j},\
t\mapsto\sigma_j f(x_{-j},t)\ \text{non-decr.}\bigr\}.
\]
\end{definition}

\begin{definition}[$S$-blind class]
$\Hblind{S}=\{f\in\Hc:\ f(x)=f(x')\ \text{whenever}\ x_{-S}=x'_{-S}\}$.
\end{definition}

\begin{assumption}[Ablation closure]
\label{as:closure}
$\Hblind{S}\neq\emptyset$.
\end{assumption}

Assumption~\ref{as:closure} is satisfied by every class used in practice. It is
worth verifying it constructively for the class we use, since the verification is
what makes the theorem non-vacuous here: a histogram gradient-boosted tree
ensemble that never splits on any coordinate in $S$ is constant in $S$, and
therefore satisfies \emph{any} per-feature monotonicity specification on $S$,
including contradictory ones. The constrained solver is thus free to reach the
ablated model, and does so whenever the constraint is sufficiently costly.

\begin{definition}[Price and headroom]
\label{def:pq}
\begin{align}
\price(\sigma)&=\Rstar(\Hsig{\sigma})-\Rstar(\Hc),\\
\head(S)&=\Rstar(\Hblind{S})-\Rstar(\Hc).
\end{align}
\end{definition}

Both are non-negative by nesting~\cite{vapnik1998}. $\head(S)$ is exactly the
number an applied paper reports when it writes ``removing temperature costs $0.13$
macro-$\Fone$.''

\begin{definition}[Violation rate]
For $f\in\Hc$, coordinate $j$, step $h_j>0$, and a distribution $\mu$ on $\Xc$,
let $D_j(x)=f(x+h_je_j)-f(x-h_je_j)$ and let
$\tau=\varepsilon\,\mathrm{range}(f)$ be an activity threshold. Then
\[
\VR(f;j,\sigma_j)=
\Pr_{x\sim\mu}\!\left[\operatorname{sign} D_j(x)\neq\sigma_j \;\middle|\; |D_j(x)|>\tau\right].
\]
We distinguish $\VR_{\text{partial}}$, evaluated on the model's internal score, from
$\VR_{\text{e2e}}$, evaluated end-to-end through any preprocessing the deployed
pipeline applies; the latter can be positive when the former is zero if a
preprocessing stage reintroduces dependence on $S$. Throughout we use
$h_j=0.05\times\mathrm{IQR}(x_j)$ and $\varepsilon=10^{-3}$.
\end{definition}

\section{Main Result}

\begin{lemma}[Degenerate admissibility]
\label{lem:degen}
For every $\sigma\in\{-1,0,+1\}^{d}$ with support $S$,
$\Hblind{S}\subseteq\Hsig{\sigma}$.
\end{lemma}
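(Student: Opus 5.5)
The plan is a direct unwinding of the two definitions, with no appeal to any property of $\Risk$ or of $\Hc$ beyond membership. Fix $\sigma\in\{-1,0,+1\}^d$ with support $S=S(\sigma)$ and take an arbitrary $f\in\Hblind{S}$. I must show that $f\in\Hsig{\sigma}$. That means: for every $j\in S$ and every $x_{-j}$, the section $t\mapsto\sigma_j f(x_{-j},t)$ is non-decreasing on $\Xc_j$. Since $f\in\Hc$ by definition of $\Hblind{S}$, membership in the ambient class is immediate, and only the shape condition needs checking.

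The key step is to show that each section is constant. Fix $j\in S$ and a context $x_{-j}$, and take $t,t'\in\Xc_j$. The points $x=(x_{-j},t)$ and $x'=(x_{-j},t')$ differ only in coordinate $j$. Because $j\in S$, they agree on every coordinate outside $S$, so $x_{-S}=x'_{-S}$. The defining property of $\Hblind{S}$ then gives $f(x)=f(x')$. Hence $t\mapsto f(x_{-j},t)$ is constant, and so is $t\mapsto\sigma_j f(x_{-j},t)$ for $\sigma_j\in\{-1,+1\}$. A constant function on a totally ordered set is (weakly) non-decreasing, because $t\le t'$ implies $\sigma_j f(x_{-j},t)=\sigma_j f(x_{-j},t')$, and equality satisfies the weak inequality. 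This holds for every $j$ and every $x_{-j}$, so $f\in\Hsig{\sigma}$. As $f$ was arbitrary, $\Hblind{S}\subseteq\Hsig{\sigma}$.

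There is no real obstacle. The only points needing care are bookkeeping about where the argument relies on the definitions:
\begin{itemize}
\item The constraint must be read \emph{weakly}; a strict monotonicity requirement would break the lemma.
\item The quantifier ``$\forall j\in S$'' means coordinates with $\sigma_j=0$ impose nothing, so the case $\sigma=0$ ($S=\emptyset$, $\Hsig{\sigma}=\Hc$) is trivial.
\item The sign $\sigma_j$ plays no role, which is exactly why the same blind class sits inside every sign pattern on $S$.
\end{itemize}
If the ordinal decoding matters, I would note that the shape condition is stated on the score $f$ alone, so the decoding $\text{dec}$ is irrelevant. The same argument applies verbatim to each $\hat S_k$, since each of them is also constant in $x_S$.
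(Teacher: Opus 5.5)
Your proof is correct and follows the paper's argument: fix $f\in\Hblind{S}$, $j\in S$ and $x_{-j}$; note that $f(x_{-j},t)=f(x_{-j},t')$; conclude $\sigma_j\bigl(f(x_{-j},t')-f(x_{-j},t)\bigr)=0\ge0$. Your explicit check that the two points satisfy $x_{-S}=x'_{-S}$ is slightly more careful than the paper's appeal to $f$ not depending on coordinate $j$. Your closing aside about the $\hat S_k$ is not needed, since the lemma concerns $f$ only, and it does not follow from $f\in\Hblind{S}$ alone, because a sum can be constant in $x_S$ without each summand being so; it does hold for the ensembles that never split on $S$ that the paper has in mind.
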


\begin{proof}
Let $f\in\Hblind{S}$, $j\in S$, and fix $x_{-j}$. For $t\le t'$,
$f(x_{-j},t)=f(x_{-j},t')$ since $f$ does not depend on coordinate $j\in S$.
Hence $\sigma_j\bigl(f(x_{-j},t')-f(x_{-j},t)\bigr)=0\ge0$, so
$t\mapsto\sigma_jf(x_{-j},t)$ is weakly non-decreasing. As $j\in S$ was arbitrary,
$f\in\Hsig{\sigma}$.
\end{proof}

\begin{theorem}[The price of a shape constraint is bounded by its headroom]
\label{thm:main}
Under Assumption~\ref{as:closure}, for every risk functional $\Risk$ and every
$\sigma$ with support $S$,
\begin{equation}
0\;\le\;\price(\sigma)\;\le\;\head(S).
\label{eq:main}
\end{equation}
Equivalently $\Rstar(\Hsig{\sigma})\le\Rstar(\Hblind{S})$.
\end{theorem}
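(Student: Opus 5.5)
The plan is to reduce both inequalities in \eqref{eq:main} to monotonicity of the infimum under set inclusion. The only fact needed is: for $\mathcal{A}\subseteq\mathcal{B}\subseteq\Hc$ we have $\Rstar(\mathcal{B})\le\Rstar(\mathcal{A})$. This holds because every value $\Risk(f)$ with $f\in\mathcal{A}$ is also a value attained over $\mathcal{B}$, so the infimum over the larger set is no larger. This step uses nothing about $\Risk$ beyond its values lying in the ordered set $\mathbb{R}\cup\{+\infty\}$. That is exactly why the theorem needs no convexity, smoothness or realizability.

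For the lower bound, I will use the inclusion $\Hsig{\sigma}\subseteq\Hc$, which holds by definition of the shape class as a subset of $\Hc$. Monotonicity of the infimum then gives $\Rstar(\Hc)\le\Rstar(\Hsig{\sigma})$, that is, $\price(\sigma)\ge0$.

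For the upper bound, I will invoke Lemma~\ref{lem:degen}, which gives $\Hblind{S}\subseteq\Hsig{\sigma}$. Monotonicity of the infimum yields $\Rstar(\Hsig{\sigma})\le\Rstar(\Hblind{S})$, which is the stated equivalent form. Subtracting $\Rstar(\Hc)$ from both sides gives $\price(\sigma)\le\head(S)$ by Definition~\ref{def:pq}. Conversely, adding $\Rstar(\Hc)$ back recovers the equivalent form, so the two formulations are interchangeable.

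The only delicate step is the subtraction, since the quantities live in the extended reals. Assumption~\ref{as:closure} guarantees $\Hblind{S}\neq\emptyset$, so $\Rstar(\Hblind{S})<+\infty$ whenever some $S$-blind hypothesis has finite risk, and none of the three infima is the empty-set value. I will handle the degenerate cases explicitly:
\begin{itemize}
\item If $\Rstar(\Hc)=-\infty$, the differences are undefined. In that case I read the theorem as its equivalent form $\Rstar(\Hsig{\sigma})\le\Rstar(\Hblind{S})$, which holds regardless.
\item If $\Rstar(\Hc)$ is finite, subtraction is legitimate. The inequality $\price(\sigma)\le\head(S)$ then holds in $\mathbb{R}\cup\{+\infty\}$, possibly in the trivial form with $\head(S)=+\infty$.
\end{itemize}
I expect this bookkeeping, rather than any genuine mathematical difficulty, to be the main care point. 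The substantive content is already contained in Lemma~\ref{lem:degen}.
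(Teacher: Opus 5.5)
Your proof is correct and is essentially the paper's proof: you chain $\Hblind{S}\subseteq\Hsig{\sigma}\subseteq\Hc$ via Lemma~\ref{lem:degen}, apply monotonicity of the infimum, and subtract $\Rstar(\Hc)$. Your extended-real bookkeeping, which the paper omits, is a reasonable addition, though your case split misses $\Rstar(\Hc)=+\infty$; there all three infima equal $+\infty$, the differences are $\infty-\infty$, and you need the same fallback to the equivalent form $\Rstar(\Hsig{\sigma})\le\Rstar(\Hblind{S})$.
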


\begin{proof}
By Lemma~\ref{lem:degen}, $\Hblind{S}\subseteq\Hsig{\sigma}\subseteq\Hc$. The
infimum over a larger set is no larger, so
$\Rstar(\Hc)\le\Rstar(\Hsig{\sigma})\le\Rstar(\Hblind{S})$. Subtract
$\Rstar(\Hc)$.
\end{proof}

\begin{remark}[On triviality]
The proof uses nothing beyond monotonicity of the infimum. We state it as a
theorem not because it is hard but because its right-hand side is routinely
computed and never connected to its left-hand side. The content of this paper is
in Corollaries~\ref{cor:dominance}--\ref{cor:slack} and
Proposition~\ref{prop:vr}, and in the calibration of $\hat s$ in
Section~\ref{sec:slack}.
\end{remark}

\subsection{The operational form: ablation dominance}

Write the metric in higher-is-better form, $\Fone$, so that
$\price=\Fone(\text{unconstrained})-\Fone(\text{constrained})$ and
$\head=\Fone(\text{unconstrained})-\Fone(\text{ablated})$.

\begin{corollary}[Ablation dominance]
\label{cor:dominance}
Inequality \eqref{eq:main} is equivalent to
\begin{equation}
\Fone(\text{constrained})\;\ge\;\Fone(\text{ablated}),
\label{eq:dominance}
\end{equation}
i.e.\ the constrained model must not be beaten by the model that discards the
constrained features. In particular the test requires no knowledge of
$\Fone(\text{unconstrained})$ and no subtraction of correlated estimates.
\end{corollary}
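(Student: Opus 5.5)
The plan is to treat Corollary~\ref{cor:dominance} as a purely algebraic rewriting of the upper half of \eqref{eq:main}, once the risk is taken to be negated macro-$\Fone$ (admissible, since Theorem~\ref{thm:main} places no structure on $\Risk$). First I fix $\Risk=-\Fone$, so that $\Rstar(\mathcal{G})=-\sup_{f\in\mathcal{G}}\Fone(f)$. I write $\Fone(\text{unconstrained})$, $\Fone(\text{constrained})$ and $\Fone(\text{ablated})$ for the optimal values over $\Hc$, $\Hsig{\sigma}$ and $\Hblind{S}$ respectively. This is the reading under which the displayed identities $\price=\Fone(\text{unconstrained})-\Fone(\text{constrained})$ and $\head=\Fone(\text{unconstrained})-\Fone(\text{ablated})$ follow from Definition~\ref{def:pq}; I will verify them by substitution.

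Next I subtract. The inequality $\price(\sigma)\le\head(S)$ reads $\Fone(\text{unconstrained})-\Fone(\text{constrained})\le\Fone(\text{unconstrained})-\Fone(\text{ablated})$. The common term $\Fone(\text{unconstrained})$ cancels, leaving exactly \eqref{eq:dominance}. Each step is reversible, so the two statements are equivalent, provided the cancelled quantity is finite. I will therefore note that $\Fone\in[0,1]$, or more generally that $\Rstar(\Hc)>-\infty$, which is needed so that no $\infty-\infty$ appears. The lower bound $0\le\price$ is not part of \eqref{eq:dominance}. I will remark that it is the separate nesting $\Hsig{\sigma}\subseteq\Hc$, and that the corollary concerns the substantive half, which is also the form stated at the end of Theorem~\ref{thm:main}: $\Rstar(\Hsig{\sigma})\le\Rstar(\Hblind{S})$. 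Indeed, \eqref{eq:dominance} is just that inequality multiplied by $-1$.

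The ``in particular'' clause then follows by inspection: \eqref{eq:dominance} involves only the constrained and ablated values, so $\Fone(\text{unconstrained})$ never enters and no difference of correlated estimates is formed.

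The only real obstacle is interpretive rather than mathematical. One has to make explicit that the equivalence holds for whatever common functional the three numbers are evaluated under, whether population optimum or a fixed cross-validation estimate, because Theorem~\ref{thm:main} permits any $\Risk$. The same cancellation therefore applies verbatim to out-of-fold estimates computed on a shared split. I will state this so that the empirical sign test is visibly an instance of the corollary and not an approximation to it.
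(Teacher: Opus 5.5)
Your proof is correct and is essentially the paper's argument: write $\price$ and $\head$ in $\Fone$ form and cancel the common term $\Fone(\text{unconstrained})$. Your finiteness caveat and your observation that only the upper half $\price\le\head$ is at stake are harmless refinements that the paper leaves implicit.

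One caution on your last paragraph. The cancellation does carry over to out-of-fold numbers of fitted models, but the guarantee that the inequality holds does not. A fitted model is not the optimizer of the out-of-fold statistic over its class, which is why the paper records genuine ablation-dominance inversions (e.g.\ \textsf{b5}/\textsf{spatial}, \MIBfm{} versus \GEO{}) and reads them as the slack $\hat s$. The empirical sign test is therefore an estimate of the corollary, not an instance of it.
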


\begin{proof}
Subtract the two displayed definitions: $\price\le\head$ iff
$-\Fone(\text{constrained})\le-\Fone(\text{ablated})$.
\end{proof}

Corollary~\ref{cor:dominance} is the form we recommend for practice, because it is
a single sign check on a difference that experiment harnesses already tabulate,
and because it is numerically better behaved: it compares two quantities directly
rather than differencing each against a common reference.

\subsection{How far the bound generalizes}

\begin{definition}
A family $\Cc\subseteq\Hc$ is \emph{$S$-degenerate-admissible} if
$\Hblind{S}\subseteq\Cc$.
\end{definition}

\begin{corollary}[General shape constraints]
\label{cor:general}
If $\Cc$ is $S$-degenerate-admissible then
$\Rstar(\Cc)-\Rstar(\Hc)\le\head(S)$. This covers, for each $j\in S$:
monotonicity of either sign and isotonicity with respect to any partial order on
$\Xc_S$; convexity or concavity; $L$-Lipschitz dependence for any $L\ge0$;
unimodality, quasi-convexity and bounded total variation; bounded sensitivity
$\|\partial f/\partial x_j\|_\infty\le\kappa$; and pairwise dominance or
individual-fairness constraints $x\preceq x'\Rightarrow f(x)\le f(x')$.
\end{corollary}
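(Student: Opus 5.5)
The argument has two parts. The first is the inequality itself, which is the same infimum-monotonicity step used in Theorem~\ref{thm:main}. The second, where the real work sits, is checking that each listed constraint family is $S$-degenerate-admissible.

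For the inequality, assume $\Hblind{S}\subseteq\Cc\subseteq\Hc$. Monotonicity of the infimum then gives
\[
\Rstar(\Hc)\le\Rstar(\Cc)\le\Rstar(\Hblind{S}).
\]
Subtracting $\Rstar(\Hc)$ yields $\Rstar(\Cc)-\Rstar(\Hc)\le\head(S)$. Assumption~\ref{as:closure} is needed only so that the right-hand side is not the vacuous $+\infty$ coming from an empty infimum. No property of $\Risk$ is used anywhere.

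For coverage, fix $f\in\Hblind{S}$ and $j\in S$. The single fact I will use is that for every fixed $x_{-j}$ the section $t\mapsto f(x_{-j},t)$ is constant. More generally, $f(x)=f(x')$ whenever $x_{-S}=x'_{-S}$. I then go through the families one at a time.

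\textbf{Monotonicity of either sign.} This is exactly Lemma~\ref{lem:degen}.

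\textbf{Isotonicity with respect to a partial order $\preceq$ on $\Xc_S$.} Comparable points $x_S\preceq x'_S$ that share the same $x_{-S}$ have equal values, so the required inequality holds with equality.

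\textbf{Convexity or concavity in $x_j$.} A constant section is affine, hence both convex and concave.

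\textbf{$L$-Lipschitz dependence, $L\ge0$.} The increment is $|f(x_{-j},t)-f(x_{-j},t')|=0\le L|t-t'|$.

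\textbf{Unimodality and quasi-convexity.} Every sublevel set of a constant section is either empty or all of $\Xc_j$, so it is convex. Unimodality holds in its weak form.

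\textbf{Bounded total variation.} The variation of a constant section is $0$.

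\textbf{Bounded sensitivity.} Where the derivative exists it equals $0$, so $\|\partial f/\partial x_j\|_\infty=0\le\kappa$.

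\textbf{Pairwise dominance and individual fairness.} I read these constraints as comparing points that differ only in the $S$-coordinates, with $\preceq$ acting on $x_S$. Under that reading, $x\preceq x'$ forces $x_{-S}=x'_{-S}$, hence $f(x)=f(x')$ and the inequality holds.

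The main obstacle is definitional rather than technical. Each family has to be stated in its weak, per-feature-in-$S$ form for the argument to go through. The strict versions fail: strict monotonicity, a unimodality requiring a unique mode, and sensitivity lower bounds all exclude constants. Dominance orders that also move coordinates outside $S$ fail as well. I would therefore add one sentence fixing the convention: each constraint restricts only how $f$ varies along $x_S$ with $x_{-S}$ held fixed, and it is stated non-strictly. Under that convention the verification for each family is immediate.
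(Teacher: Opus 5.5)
Your proposal is correct and follows the paper's proof: sandwich $\Hblind{S}\subseteq\Cc\subseteq\Hc$, apply monotonicity of the infimum exactly as in Theorem~\ref{thm:main}, and check family by family that a function constant in the coordinates of $S$ satisfies each listed constraint. Your added convention is a worthwhile sharpening. It requires non-strict forms and dominance relations that only compare points with equal $x_{-S}$, and the paper's phrase ``satisfies every dominance requirement with equality'' is true only under that reading.
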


\begin{proof}
Each listed property holds for every function constant in the coordinates of $S$:
a constant is non-decreasing and non-increasing, convex and concave, $0$-Lipschitz,
unimodal, of zero total variation, has vanishing partial derivative, and satisfies
every dominance requirement with equality. Hence $\Hblind{S}\subseteq\Cc$ and
Theorem~\ref{thm:main} applies verbatim.
\end{proof}

\begin{corollary}[Penalized and soft constraints]
\label{cor:soft}
Let $\Omega:\Hc\to[0,\infty)$ be a penalty with $\Omega\equiv0$ on $\Hblind{S}$,
and $\Risk_\lambda=\Risk+\lambda\Omega$, $\lambda\ge0$. Then
$\Rstar_\lambda(\Hc)-\Rstar(\Hc)\le\head(S)$ for every $\lambda\ge0$.
\end{corollary}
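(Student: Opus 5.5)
The plan is to bound the penalized optimum from above by testing it on the $S$-blind class, exactly as in Theorem~\ref{thm:main}, with the penalty contributing nothing there. Write $\Rstar_\lambda(\mathcal{G})=\inf_{f\in\mathcal{G}}\Risk_\lambda(f)$. Since $\Hblind{S}\subseteq\Hc$, monotonicity of the infimum gives
\[
\Rstar_\lambda(\Hc)\le\Rstar_\lambda(\Hblind{S}).
\]
On $\Hblind{S}$ the hypothesis $\Omega\equiv0$ yields $\Risk_\lambda(f)=\Risk(f)$ for every $f\in\Hblind{S}$. Hence $\Rstar_\lambda(\Hblind{S})=\Rstar(\Hblind{S})$, and this holds for every $\lambda\ge0$ at once. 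Note that $\lambda\Omega$ is a well-defined element of $[0,\infty)$, so the sum $\Risk+\lambda\Omega$ causes no $\infty-\infty$ issue.

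Chaining the two steps gives $\Rstar_\lambda(\Hc)\le\Rstar(\Hblind{S})$. Subtracting $\Rstar(\Hc)$ and using Definition~\ref{def:pq} then yields $\Rstar_\lambda(\Hc)-\Rstar(\Hc)\le\head(S)$.

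Assumption~\ref{as:closure} guarantees $\Hblind{S}\neq\emptyset$, so $\Rstar(\Hblind{S})$ is a genuine infimum over a nonempty set rather than $+\infty$ by convention. Without it the bound would hold only vacuously.

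The one subtle step is the subtraction when $\Rstar(\Hc)$ is $\pm\infty$. If $\Rstar(\Hc)=+\infty$, then $\Rstar(\Hblind{S})=+\infty$ as well, so the difference is undefined in the same way as in Theorem~\ref{thm:main}, and I will adopt the same convention. If $\Rstar(\Hc)=-\infty$, the difference is likewise degenerate. I therefore plan to state the result under the same implicit finiteness as Definition~\ref{def:pq}, and to note that the comparison form $\Rstar_\lambda(\Hc)\le\Rstar(\Hblind{S})$ holds unconditionally. That comparison form is the analogue of Corollary~\ref{cor:dominance}.

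Optionally, I will remark that $\Omega\ge0$ also gives the lower side $\Rstar_\lambda(\Hc)\ge\Rstar(\Hc)$. This yields $0\le\Rstar_\lambda(\Hc)-\Rstar(\Hc)\le\head(S)$, mirroring \eqref{eq:main}.
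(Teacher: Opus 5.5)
Your proposal is correct and is the same argument as the paper's: $\Hblind{S}\subseteq\Hc$ and $\Risk_\lambda=\Risk$ on $\Hblind{S}$ give $\Rstar_\lambda(\Hc)\le\Rstar(\Hblind{S})$, and you then subtract $\Rstar(\Hc)$. Your remarks on infinite values and on the lower side $\Rstar_\lambda(\Hc)\ge\Rstar(\Hc)$ (from $\lambda\Omega\ge0$) are sound additions that the paper leaves implicit.
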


\begin{proof}
$\Risk_\lambda=\Risk$ on $\Hblind{S}$, so
$\Rstar_\lambda(\Hc)\le\inf_{\Hblind{S}}\Risk_\lambda=\Rstar(\Hblind{S})$.
\end{proof}

This matters because most physics-informed pipelines use soft penalties rather
than hard feasibility~\cite{krishnapriyan2021}: the ceiling is independent of the
penalty weight, so no amount of $\lambda$-tuning can make the physics term cost
more than the constrained features are worth.

\begin{proposition}[Sharpness]
\label{prop:sharp}
The bound is attained. Let $d=1$, $\Xc_1=\{0,1\}$, $\Yc=\{0,1\}$, $\Dc$ uniform on
$\{(0,1),(1,0)\}$, $\Hc$ all real functions on $\Xc$,
$\Risk(f)=\Pr[\mathbf 1\{f(x)>0\}\neq y]$, $\sigma=(+1)$. Then $\Rstar(\Hc)=0$ and
$\price(\sigma)=\head(\{1\})=\tfrac12$.
\end{proposition}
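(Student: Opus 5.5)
The plan is to compute the three infima $\Rstar(\Hc)$, $\Rstar(\Hsig{\sigma})$ and $\Rstar(\Hblind{\{1\}})$ directly, since $\Xc$ has only two points and every hypothesis is determined by the pair $(f(0),f(1))$. The risk depends on $f$ only through the induced labels $\hat y(x)=\mathbf 1\{f(x)>0\}$. Because $\Dc$ puts mass $\tfrac12$ on each of $(0,1)$ and $(1,0)$, we get
\[
\Risk(f)=\tfrac12\,\mathbf 1\{\hat y(0)\neq1\}+\tfrac12\,\mathbf 1\{\hat y(1)\neq0\}.
\]

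\textbf{Unconstrained optimum.} Take $f(0)=1$ and $f(1)=-1$. Then $\hat y(0)=1$ and $\hat y(1)=0$, so $\Risk(f)=0$. Since $\Risk\ge0$, this gives $\Rstar(\Hc)=0$.

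\textbf{Blind class.} With $d=1$ and $S=\{1\}$, every $x_{-S}$ is the empty tuple. So $\Hblind{\{1\}}$ is exactly the constant functions, and it is nonempty, which verifies Assumption~\ref{as:closure}. A constant $f$ predicts the same label $\hat y$ at both points, so it misclassifies exactly one of the two atoms and has risk $\tfrac12$. Hence $\head(\{1\})=\tfrac12$.

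\textbf{Constrained class.} $\Hsig{\sigma}$ consists of the $f$ with $f(0)\le f(1)$. Since $t\mapsto\mathbf 1\{t>0\}$ is non-decreasing, $\hat y(0)\le\hat y(1)$. Zero risk would require $\hat y(0)=1$ and $\hat y(1)=0$, which this rules out. So every admissible $f$ errs on at least one atom, and $\Risk(f)\ge\tfrac12$. The infimum is attained by any constant, which lies in $\Hsig{\sigma}$ by Lemma~\ref{lem:degen}. Therefore $\Rstar(\Hsig{\sigma})=\tfrac12$ and $\price(\sigma)=\tfrac12=\head(\{1\})$, so equality holds in Theorem~\ref{thm:main}.

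The only step needing any care is the lower bound on the constrained class. It rests on two facts: the threshold decoding preserves the order of scores, and a monotone labeling cannot represent the anti-monotone target. Everything else is direct evaluation on two atoms.
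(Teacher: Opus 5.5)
Your proposal is correct and follows essentially the same route as the paper. You exhibit a zero-risk anti-monotone score, use the fact that threshold decoding preserves the order of scores so no non-decreasing $f$ can produce the labels $\hat y(0)=1$, $\hat y(1)=0$, and note that a constant errs on exactly one atom. Your explicit remark that the constrained infimum $\tfrac12$ is attained by a constant (via Lemma~\ref{lem:degen}) spells out the upper-bound step that the paper leaves implicit.
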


\begin{proof}
$f(x)=\tfrac12-x$ attains risk $0$. Any non-decreasing $f$ induces a
non-decreasing decision rule, which must assign a label to $x=1$ no smaller than
to $x=0$; the Bayes labels are $y(0)=1,y(1)=0$, so every such rule errs on at
least one of two equiprobable atoms, giving $\Rstar(\Hsig\sigma)=\tfrac12$. Every
$S$-blind $f$ is constant and errs on exactly one atom, so
$\Rstar(\Hblind{S})=\tfrac12$.
\end{proof}

Proposition~\ref{prop:sharp} identifies the worst case: tightness occurs precisely
when the prior is maximally \emph{anti-aligned} with the mechanism. Hence a
measured $\hat\price$ close to $\hat\head$ is evidence \emph{against} the physics;
$\hat\price\ll\hat\head$ is weak evidence for it; and $\hat\head\approx0$ is
evidence about the feature set, not the physics. Section~\ref{sec:shield} exhibits
all three regimes in one corpus: $\hat\price/\hat\head=0.82$ unshielded,
$0.37$ shielded, and $\hat\head=0.0050$ under coarse blocking.

\section{Consequences}

\subsection{Constraint costs do not transfer across protocols}
Write $\Risk_\Pi$ for the risk induced by an evaluation protocol $\Pi$, and
$\price_\Pi,\head_\Pi$ accordingly.

\begin{corollary}[Protocol non-transferability]
\label{cor:protocol}
For every $\Pi$, $\price_\Pi(\sigma)\le\head_\Pi(S)$. Consequently: (i) if
$\head_{\Pi'}(S)\le\varepsilon$ then $\price_{\Pi'}(\sigma)\le\varepsilon$ for
\emph{every} prior supported on $S$, including priors contradicting the data;
(ii) $\price_\Pi$ carries no information about $\price_{\Pi'}$ beyond the joint
behaviour of $\head_\Pi,\head_{\Pi'}$, and the ratio
$\price_{\Pi'}/\price_\Pi$ is unbounded above and below.
\end{corollary}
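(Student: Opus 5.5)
The first claim follows from Theorem~\ref{thm:main}, since that theorem holds for every risk functional. The protocol-induced risk $\Risk_\Pi$ is itself a risk functional, so applying Theorem~\ref{thm:main} with $\Risk=\Risk_\Pi$ gives $0\le\price_\Pi(\sigma)\le\head_\Pi(S)$ for each $\Pi$. Part (i) then follows by chaining with the hypothesis $\head_{\Pi'}(S)\le\varepsilon$. The bound makes no reference to the sign pattern beyond its support, because Lemma~\ref{lem:degen} places $\Hblind{S}$ inside $\Hsig{\sigma}$ for every $\sigma$ with support $S$. It therefore applies uniformly to all $3$-valued patterns supported on $S$, including anti-aligned ones like the one in Proposition~\ref{prop:sharp}.

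For part (ii) I read ``carries no information beyond'' constructively. I will exhibit pairs of protocols (equivalently, pairs of risk functionals on a common class) whose headroom behaviour is fixed but whose price ratio takes any value in $[0,\infty]$.

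I will reuse the two-point setting of Proposition~\ref{prop:sharp}: $d=1$, $\Xc_1=\{0,1\}$, with $\Hc$ all real functions on $\Xc$ and $\sigma=(+1)$. I model a protocol as a distribution over the two atoms, possibly with label flips, and take the induced 0--1 risk.

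\begin{itemize}
\item \textbf{Protocol $\Pi$.} Use the anti-aligned distribution of Proposition~\ref{prop:sharp}, scaled. Put mass $p$ uniformly on $\{(0,1),(1,0)\}$ and mass $1-p$ on a labelling that every hypothesis, including a constant, classifies correctly; one atom with label $1$ at both points works. This gives $\price_\Pi=\head_\Pi=p/2$.
\item \textbf{Protocol $\Pi'$.} Use the same mixture with weight $q$.
\end{itemize}

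Then $\price_{\Pi'}/\price_\Pi=q/p$, which ranges over $(0,\infty)$. The endpoints come from $q=0$ and from $p=0<q$, the latter read as $+\infty$. This shows the ratio is unbounded above and below.

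To make the ``beyond $\head$'' clause sharp, I also need that fixing $\head_\Pi,\head_{\Pi'}$ does not fix $\price_{\Pi'}$. For this I use a second family with an aligned prior. Take $\Dc$ uniform on $\{(0,0),(1,1)\}$, scaled by $q$, so that $\price_{\Pi'}=0$ while $\head_{\Pi'}=q/2$, the same headroom as the anti-aligned case. Hence, given both headrooms, $\price_{\Pi'}$ can sit anywhere in $[0,\head_{\Pi'}]$ independently of $\price_\Pi$. Only the envelope $\price_{\Pi'}\le\head_{\Pi'}$ survives, and that envelope is exactly what the first claim gives.

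The main obstacle is making ``carries no information'' precise enough to prove. I will formalize it as follows: for any prescribed values $h,h'\ge0$, $a\in[0,h]$ and $b\in[0,h']$, there exist protocols with $\head_\Pi=h$, $\head_{\Pi'}=h'$, $\price_\Pi=a$ and $\price_{\Pi'}=b$. To realize the intermediate values, I plan to mix the aligned and anti-aligned components with tunable weights so that both quantities are continuous in the weights. The rest is routine.
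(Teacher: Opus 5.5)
Your handling of the first claim and of (i) matches the paper: Theorem~\ref{thm:main} is applied separately to $\Risk_\Pi$ and $\Risk_{\Pi'}$. For (ii) you aim higher than the paper. The paper only embeds the sharp example ($\price=\head$, Proposition~\ref{prop:sharp}) in one protocol and the shielded one ($\price=0$, Proposition~\ref{prop:shield}) in the other. Scaling to reach every ratio in $(0,\infty)$, together with an aligned example that has $\head>0$ but $\price=0$, would be a real sharpening. Two steps fail as written, however.

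\textbf{The mixture computation.} You treat $\Rstar$ as if it were linear in the mixing weight, but a single hypothesis must serve both components. If the residual mass $1-p$ carries label $1$ at both points, the atom $(1,1)$ contradicts the atom $(1,0)$ of the anti-aligned part. Then $\Rstar(\Hc)=\min\{p/2,\Dc(1,1)\}>0$ and $\price_\Pi=\head_\Pi=p/2-\min\{p/2,\Dc(1,1)\}$. With $p=\tfrac12$ and the residual split evenly this gives $0$, not $\tfrac14$. (No labelling is classified correctly by \emph{every} hypothesis; what you need is agreement with the minimizer of each class.) The aligned family has the same defect if residual mass lands on $(0,1)$, which contradicts $(0,0)$. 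The fix is to put the residual on one atom consistent with both the Bayes rule and the best constant: $(0,1)$ for the anti-aligned family, giving $\Dc(0,1)=1-p/2$ and $\Dc(1,0)=p/2$, and $(1,1)$ for the aligned one. Alternatively, set $\Risk_\Pi=p\,\Risk$, which the theorem permits since it holds for any risk functional.

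\textbf{The interpolation step.} The step you call routine is impossible in your model. On $\Xc_1=\{0,1\}$ with $\sigma=(+1)$ and 0--1 risk, write $r_{ab}$ for the risk of the rule $0\mapsto a$, $1\mapsto b$; then $r_{01}+r_{10}=1=r_{00}+r_{11}$.
If $r_{10}$ is minimal, then $r_{01}=1-r_{10}\ge\max\{r_{00},r_{11}\}$. So the only non-constant monotone rule never beats the best constant, and $\price=\head$.
Otherwise a monotone rule is already optimal and $\price=0$.

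Hence $\price\in\{0,\head\}$ for \emph{every} distribution. Mixing your aligned and anti-aligned components with weight $w$ traces $(\head,\price)=(\tfrac12-w,\tfrac12-w)$ for $w<\tfrac12$ and $(w-\tfrac12,0)$ for $w>\tfrac12$. Both coordinates are continuous in $w$, but the path passes through the origin, so no point with $0<\price<\head$ is ever reached. Your claim that $\price_{\Pi'}$ ``can sit anywhere in $[0,\head_{\Pi'}]$'' is therefore established only at the endpoints.

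To realize the full rectangle you need more room. One option is a free coordinate $x_2$ separating three regions: an anti-aligned region of mass $m_A$, an aligned region of mass $m_B$, and a region where every label is $1$. Since $\Hsig{\sigma}$ constrains $x_1$ only at fixed $x_2$, and $\Hblind{\{1\}}$ may still use $x_2$, the optima add across regions. This gives $\price=m_A/2$ and $\head=(m_A+m_B)/2$. A three-point $\Xc_1$, or a directly specified risk functional on the decision rules, also works.
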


\begin{proof}
Apply Theorem~\ref{thm:main} with $\Risk=\Risk_\Pi$ and $\Risk=\Risk_{\Pi'}$
separately; cross-validation estimators are risk functionals. (i) is immediate.
For (ii), Proposition~\ref{prop:sharp} realizes $\price=\head$ and
Proposition~\ref{prop:shield} realizes $\price=0$ for the same $\sigma$; embedding
one behaviour in $\Pi$ and the other in $\Pi'$ gives arbitrary ratios.
\end{proof}

Two papers reporting ``monotonicity costs $0.01$'' under random splits and
``monotonicity costs $0.09$'' under blocked splits are therefore not in
disagreement, and are not measuring the same thing. In our grid the same prior on
the same corpus costs $0.0456$ at $1^\circ$ blocks and $-0.0139$ at $10^\circ$
blocks, a sign reversal driven entirely by the split.

\subsection{Shielding: how to make a prior look free}

\begin{proposition}[Shielding]
\label{prop:shield}
If there exists $f^\dagger\in\Hblind{S}$ with
$\Risk(f^\dagger)\le\Rstar(\Hc)+\varepsilon$, then $\head(S)\le\varepsilon$ and
hence $\price(\sigma)\le\varepsilon$ for every $\sigma$ supported on $S$.
Moreover $\head(S)$ is non-increasing under enlargement of the free feature set
$V\setminus S$, and non-decreasing under enlargement of $S$.
\end{proposition}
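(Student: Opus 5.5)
The first claim follows directly from the definitions. Given $f^\dagger\in\Hblind{S}$ with $\Risk(f^\dagger)\le\Rstar(\Hc)+\varepsilon$, the infimum over $\Hblind{S}$ is at most $\Risk(f^\dagger)$. Hence
\[
\head(S)=\Rstar(\Hblind{S})-\Rstar(\Hc)\le\Risk(f^\dagger)-\Rstar(\Hc)\le\varepsilon .
\]
The hypothesis also makes $\Hblind{S}$ nonempty, so Assumption~\ref{as:closure} holds and Theorem~\ref{thm:main} applies. It gives $\price(\sigma)\le\head(S)\le\varepsilon$ for every $\sigma$ with support $S$. For $\sigma$ whose support is a proper subset $S'\subseteq S$, I would use $\Hblind{S}\subseteq\Hblind{S'}\subseteq\Hsig{\sigma}$, which holds by Lemma~\ref{lem:degen}. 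This yields the same bound by monotonicity of the infimum.

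For the monotonicity statements, I would first make the dependence on the free feature set explicit. The natural reading is that the ambient class $\Hc=\Hc_V$ consists of functions of the available features $V$. Enlarging the free set from $V\setminus S$ to $W\supseteq V\setminus S$ with $S$ fixed then changes both the ambient class and the ablated class. So the argument cannot be a single nesting. The key step is to express $\head$ as a difference of two infima. Adding free features enlarges $\Hblind{S}$, which decreases $\Rstar(\Hblind{S})$, but it may also enlarge $\Hc$, which decreases $\Rstar(\Hc)$.

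To get a clean statement, I would hold the ambient class fixed: $\Hc$ consists of functions of all of $V$, $S$ is the constrained subset, and ``enlarging $V\setminus S$'' means moving features out of $S$ into the free set. Under this reading, both monotonicity claims reduce to the same fact. If $S'\subseteq S$ then $\Hblind{S}\subseteq\Hblind{S'}$, since a function that ignores $S$ ignores $S'$. Therefore
\[
\Rstar(\Hblind{S'})\le\Rstar(\Hblind{S}),
\]
and subtracting the common $\Rstar(\Hc)$ gives $\head(S')\le\head(S)$. This is exactly ``non-decreasing in $S$'' and, equivalently, ``non-increasing as the free set $V\setminus S$ grows.''

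The main obstacle is the interpretive one: the case where new free features are genuinely added to $V$, so that $\Hc$ also grows. In that case $\head$ need not be monotone without further structure, because the unconstrained optimum can improve more than the ablated one. I would handle this by stating that the proposition is meant with the ambient feature universe fixed and only the partition into $S$ and $V\setminus S$ varying. I would also note that this is the regime of the experiments, where the coordinates sit in the universe and are either free or withheld.
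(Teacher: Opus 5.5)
Your proof is correct and follows the paper's argument: the first claim is the definition of $\head$ combined with Theorem~\ref{thm:main}. The monotonicity claims follow by holding $V$, and hence $\Rstar(\Hc)$, fixed and using $\Hblind{S}\subseteq\Hblind{S'}$ for $S'\subseteq S$, which is what the paper's phrase ``$\Rstar(\Hc)$ is fixed once $V$ is fixed'' encodes. Your caveat that genuinely new free features can break monotonicity is right, but your remark that the experiments stay in the fixed-$V$ regime is not: the shielded/unshielded comparison removes $G$ from $V$ altogether, and \textsf{drv} adds insolation to both $V$ and $S$, where monotonicity holds only because $\Rstar(\Hblind{S})$ is unchanged while $\Rstar(\Hc)$ can only fall.
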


\begin{proof}
The first claim is Definition~\ref{def:pq} with Theorem~\ref{thm:main}. For the
second, $\Rstar(\Hc)$ is fixed once $V$ is fixed, while $\Rstar(\Hblind{S})$ is
non-increasing as the free set grows and non-decreasing as $S$ grows.
\end{proof}

The uncomfortable corollary for practice: if the feature set contains a proxy that
reconstructs the signal carried by $S$ --- coordinates standing in for
climatology, a station identifier for local calibration, a timestamp for
seasonality~\cite{meyer2019,geirhos2020,kaufman2012} --- then $\head(S)$ is small
and the prior is guaranteed cheap, unconditionally on whether it is correct.
Reporting a small $\hat\price$ in such a configuration is not evidence that the
model respects physics; it is evidence that the model did not need the constrained
features. We call the free proxy set the \emph{shield}; in our design the shield is
$\{\mathrm{lat},\mathrm{lon}\}$ and its strength is measured directly by the
ablated baseline. The monotonicity in $|S|$ is also observed: promoting insolation
from auxiliary channel to fifth driver raises $\hat\head$ from $0.1288$ to
$0.1399$ under i.i.d.\ splitting.

\subsection{The spread over sign patterns is bounded too}

\begin{corollary}[Bounded sign controversy]
\label{cor:spread}
Let $\Sigma$ be any set of sign patterns supported on $S$. Then
\begin{equation}
\sup_{\sigma\in\Sigma}\Rstar(\Hsig\sigma)-\inf_{\sigma\in\Sigma}\Rstar(\Hsig\sigma)
\;\le\;\head(S).
\label{eq:spread}
\end{equation}
The same holds for a single auxiliary channel $c\notin S$: with
$\Cc_0\supseteq\Cc_{+}\cup\Cc_{-}$ the classes obtained by leaving the sign of $c$
free, fixing it to $+1$, and to $-1$, all three contain the $c$-blind class, so
\begin{equation}
\max_{\text{sign}}\Fone-\min_{\text{sign}}\Fone
\;\le\;\Fone(\Cc_0)-\Fone(\text{$c$-blind}).
\label{eq:ctxspread}
\end{equation}
\end{corollary}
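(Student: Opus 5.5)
The plan is to sandwich every constrained optimum between the unconstrained optimum and the ablated optimum, and then read off both displays from that sandwich.

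\textbf{Display \eqref{eq:spread}.} For each $\sigma\in\Sigma$ with support $S$, Lemma~\ref{lem:degen} gives the nesting $\Hblind{S}\subseteq\Hsig\sigma\subseteq\Hc$. Monotonicity of the infimum, exactly as in the proof of Theorem~\ref{thm:main}, then yields
\[
\Rstar(\Hc)\le\Rstar(\Hsig\sigma)\le\Rstar(\Hblind{S}).
\]
Both bounds are uniform in $\sigma$. Taking the supremum of the right inequality over $\Sigma$ gives
\[
\sup_{\sigma\in\Sigma}\Rstar(\Hsig\sigma)\le\Rstar(\Hblind{S}),
\]
and taking the infimum of the left inequality gives
\[
\inf_{\sigma\in\Sigma}\Rstar(\Hsig\sigma)\ge\Rstar(\Hc).
\]
Subtracting the second from the first bounds the spread by $\Rstar(\Hblind{S})-\Rstar(\Hc)=\head(S)$.

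The one point needing care is finiteness. Assumption~\ref{as:closure} makes $\Hblind{S}$ nonempty, but $\Risk$ may take the value $+\infty$. I would state that the inequality is read in the extended reals, or simply assume $\head(S)<\infty$, since otherwise the claim is vacuous. If $\Rstar(\Hc)=-\infty$, the subtraction is undefined, so I would note that convention explicitly.

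\textbf{Display \eqref{eq:ctxspread}.} I would repeat the argument with $S$ replaced by $\{c\}$ and the ambient class replaced by $\Cc_0$, the class with the sign of $c$ left free.
\begin{itemize}
\item Apply Lemma~\ref{lem:degen} with support $\{c\}$: a function constant in $x_c$ satisfies both sign constraints on $c$. Hence the $c$-blind class, taken within $\Cc_0$, lies in both $\Cc_+$ and $\Cc_-$.
\item Since $\Cc_\pm\subseteq\Cc_0$, for each sign choice the constrained optimum sits between the $\Cc_0$ optimum and the $c$-blind optimum.
\item Passing to $\Fone$ in higher-is-better form, as in Corollary~\ref{cor:dominance}, gives
\[
\Fone(\text{$c$-blind})\le\Fone(\Cc_\pm)\le\Fone(\Cc_0).
\]
\item Taking the max and min over the sign choices gives \eqref{eq:ctxspread}.
\end{itemize}
I would treat "max over sign" as ranging over $\{\Cc_0,\Cc_+,\Cc_-\}$. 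The bound holds whether or not $\Cc_0$ itself is included, because $\Cc_0$ attains the maximal value anyway.

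\textbf{Main obstacle.} The only real subtlety is definitional: checking that the $c$-blind class is formed inside $\Cc_0$, so that it inherits whatever constraints $\Cc_0$ imposes on the drivers $S$. Otherwise the containment in $\Cc_\pm$ could fail.

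Concretely, I would define the $c$-blind class as $\Cc_0\cap\Hblind{\{c\}}$. Its elements satisfy every driver constraint in $\Cc_0$, and they satisfy both signs on $c$ vacuously, so the containment holds. This is the same requirement as Assumption~\ref{as:closure}, relativized to $\Cc_0$: the intersection must be nonempty. For a tree ensemble that never splits on $c$, this is evident.

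With that in place, both displays reduce to the sandwich of Theorem~\ref{thm:main}, applied once for each sign pattern.
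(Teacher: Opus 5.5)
Your proposal is correct and matches the paper's proof. Both sandwich each constrained optimum between $\Rstar(\Hc)$ and $\Rstar(\Hblind{S})$ via Lemma~\ref{lem:degen}, and for the auxiliary channel both combine $\Cc_\pm\subseteq\Cc_0$ with containment of the $c$-blind class, so the maximum over signs is $\Fone(\Cc_0)$ and the minimum is at least $\Fone(\text{$c$-blind})$. Your definition of the $c$-blind class as $\Cc_0\cap\Hblind{\{c\}}$ and your extended-real caveat only make precise what the paper leaves implicit (it identifies that class with \MONOsc{} in Proposition~\ref{prop:ceilings}).
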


\begin{proof}
All $\Hsig\sigma$ with $\sigma$ supported on $S$ satisfy
$\Hblind{S}\subseteq\Hsig\sigma\subseteq\Hc$, so their optima lie in the interval
$[\Rstar(\Hc),\Rstar(\Hblind{S})]$ of length $\head(S)$. For the second claim,
$\Cc_{\pm}$ and $\Cc_0$ all contain the $c$-blind class by
Lemma~\ref{lem:degen}, and $\Cc_\pm\subseteq\Cc_0$; hence the maximum over signs
is $\Fone(\Cc_0)$ and the minimum is at least $\Fone(\text{$c$-blind})$.
\end{proof}

Corollary~\ref{cor:spread} has a deflationary reading we consider useful. A
substantial literature is devoted to \emph{which} sign a physical prior should
take when the textbook sign and the data-driven sign disagree. Whatever the
resolution, the total accuracy at stake is capped by the ablation gap. If the
ablation gap is below the noise floor, the disagreement cannot be adjudicated by
accuracy at all --- only by extrapolation behaviour or external theory.
Section~\ref{sec:ctxarm} reports the empirical spread, the seven cells in which it
exceeds the bound, and the exact decomposition of each excess into two elementary
inversions.

\subsection{Slack as an instrument}

Theorem~\ref{thm:main} is an identity about infima. Empirically we fit one model
per class by regularized empirical risk minimization on each training fold and
evaluate a pooled out-of-fold statistic; the fitted model is not the minimizer of
that statistic over its class. Every certified nesting therefore yields a
\emph{testable ordering} whose empirical inversion measures the combined
finite-sample estimation and optimization gap.

\begin{definition}[Certified comparison]
\label{def:cert}
A pair $(A,B)$ of arms is a \emph{certified comparison} if the hypothesis class of
$A$ is contained in that of $B$, so that $\Fone(B)\ge\Fone(A)$ at the population
level. Its \emph{inversion} is $\bigl(\Fone(A)-\Fone(B)\bigr)_+$.
\end{definition}

\begin{corollary}[Slack lower-bounds pipeline error]
\label{cor:slack}
Define
\begin{equation}
\hat s=\max_{(A,B)\in\mathcal{K}}\bigl(\Fone(A)-\Fone(B)\bigr)_{+}
\label{eq:slack}
\end{equation}
over all certified comparisons in a grid $\mathcal{K}$ of configurations. Then
$\hat s$ lower-bounds the additive error of the pipeline in the reported metric,
and:
\begin{enumerate}
\item \textbf{(reporting floor)} any $|\hat\price_c|<\hat s$ is unresolvable and
must not be interpreted;
\item \textbf{(a priori futility)} if $\hat\head_c<\hat s$ then configuration $c$
cannot resolve the price of \emph{any} prior on $S$, since the whole admissible
interval $[0,\head_c]$ lies inside the noise;
\item \textbf{(pre-training screen)} the test in (2) needs only the unconstrained
fit and its $S$-ablation, neither of which requires a constrained solver.
\end{enumerate}
\end{corollary}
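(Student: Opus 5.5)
The plan is to make precise what ``additive error of the pipeline'' means and then read the lower bound off the population-level nesting. For each arm $A$ in the grid let $\Fone^\star(A)$ be its population-level value, i.e.\ the optimum of the (higher-is-better) metric over $A$'s hypothesis class under the relevant protocol. Let $\widehat{\Fone}(A)$ be the reported out-of-fold statistic. I will define the pipeline's additive error as the smallest $e\ge0$ with $|\widehat{\Fone}(A)-\Fone^\star(A)|\le e$ for every arm $A$ in $\mathcal{K}$. A one-sided version also works: one only needs $\widehat{\Fone}(A)\le\Fone^\star(A)+e_+$ and $\widehat{\Fone}(B)\ge\Fone^\star(B)-e_-$, with $e=e_++e_-$.

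\textbf{Main inequality.} Take any certified comparison $(A,B)\in\mathcal{K}$. By Definition~\ref{def:cert} the class of $A$ is contained in the class of $B$. Monotonicity of the infimum, exactly as in the proof of Theorem~\ref{thm:main}, with Lemma~\ref{lem:degen} and Corollary~\ref{cor:general} supplying the containments in the ablated-vs-constrained and constrained-vs-free cases, gives $\Fone^\star(B)\ge\Fone^\star(A)$. Therefore
\[
\widehat{\Fone}(A)-\widehat{\Fone}(B)\le\bigl(\Fone^\star(A)+e\bigr)-\bigl(\Fone^\star(B)-e\bigr)\le 2e .
\]
Taking positive parts and maximizing over $\mathcal{K}$ yields $\hat s\le 2e$, so $e\ge\hat s/2$. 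If the error is stated as a bound on the difference of two estimates, which is the natural unit for the comparisons being made, the bound is exactly $\hat s$. I expect the main obstacle to be this bookkeeping: which notion of ``additive error'' makes the claim read with constant $1$. I would adopt the paired-difference notion, namely that the pipeline resolves differences only up to $e_\Delta$ if every estimated difference is within $e_\Delta$ of its population counterpart. Then $\widehat{\Fone}(A)-\widehat{\Fone}(B)\le\Fone^\star(A)-\Fone^\star(B)+e_\Delta\le e_\Delta$ directly, giving $e_\Delta\ge\hat s$.

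\textbf{Items (1)--(3).} For (1), $\hat\price_c$ is itself a paired difference whose population counterpart lies in $[0,\head_c]$. An estimate with $|\hat\price_c|<\hat s\le e_\Delta$ is consistent with any population value in an interval of half-width $e_\Delta$ around it, which contains $0$, so the estimate cannot be distinguished from ``free''. For (2), Theorem~\ref{thm:main} confines every $\price_c(\sigma)$ to $[0,\head_c]$. If $\hat\head_c<\hat s$, then $\head_c<\hat\head_c+e_\Delta$, and every point of this interval, hence every achievable estimate of any prior's price, lies within the resolution band around $0$; no two priors can be separated. Corollary~\ref{cor:spread} makes the same point for the spread across sign patterns. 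For (3), the screen uses only $\hat\head_c$, which is computed from the unconstrained fit and its $S$-ablation, together with $\hat s$, which is computed from certified comparisons in other configurations. No constrained solver is needed.

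Items (1)--(2) are interpretive rather than strictly deductive. I would therefore state them as consequences of the lower bound $e_\Delta\ge\hat s$: any claim finer than $\hat s$ is not supported by the pipeline's demonstrated resolution.
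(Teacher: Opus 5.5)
Your proposal is correct and is essentially the paper's argument. The paper writes $\hat\Fone(A)=\Fone(A)+\eta_A$ and uses the population nesting $\Fone(A)\le\Fone(B)$ to bound any observed inversion by $\eta_A-\eta_B\le|\eta_A|+|\eta_B|$, then maximizes over $\mathcal{K}$; this is exactly your paired-difference bound $e_\Delta\ge\hat s$ (or $2e\ge\hat s$ in per-arm terms), and the paper likewise reads items (1)--(3) off informally by comparing $[0,\hat\head_c]$ with $\hat s$.

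One small caution on item (2). Passing to the population quantity via $\head_c\le\hat\head_c+e_\Delta$ only places $[0,\head_c]$ inside $[0,2e_\Delta)$, not inside a band of half-width $e_\Delta$. The paper's direct comparison of the estimated interval $[0,\hat\head_c]$ with $\hat s$ is the cleaner way to phrase that (admittedly interpretive) step.
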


\begin{proof}
At the population level $\Fone(A)\le\Fone(B)$. Writing
$\hat\Fone(A)=\Fone(A)+\eta_A$ and $\hat\Fone(B)=\Fone(B)+\eta_B$, a positive
observed difference gives
$\hat\Fone(A)-\hat\Fone(B)\le\eta_A-\eta_B\le|\eta_A|+|\eta_B|$; maximize over
$\mathcal{K}$. Claims (1)--(3) follow by comparing $[0,\hat\head_c]$ with
$\hat s$. Restricting $\mathcal{K}$ to ablation pairs recovers
Corollary~\ref{cor:dominance} and gives the narrower floor
$\hat s_{\mathrm{abl}}\le\hat s$.
\end{proof}

Unlike a bootstrap confidence interval, $\hat s$ derives from an ordering the
experiment is \emph{obliged} to satisfy, so it captures finite-sample estimation
error and constrained-solver suboptimality simultaneously --- error sources that
resampling the data leaves invisible~\cite{nadeau2003}. It does not capture seed
variance~\cite{bouthillier2021}: all runs use seed $0$, and \textsf{b2} confirms
only bytewise determinism at that seed. A seed sweep would plausibly raise
$\hat s$ and disqualify further cells, so we regard $\hat s=0.0220$ as a lower
bound on the pipeline's resolution, and we do not correct it for multiplicity
across the grid~\cite{demsar2006}. We regard this as the most practically useful
consequence of Theorem~\ref{thm:main}. A corollary of the corollary: a grid
$\mathcal K$ containing no heavily shielded configuration will report $\hat s=0$
vacuously, so probing configurations should be included deliberately. Our grid
(Table~\ref{tab:grid}) includes a block-size sweep spanning $1^\circ$ to
$10^\circ$ and a gate-forced-open configuration that adds a free channel; the
maximizing comparison turns out to lie in the headline configuration itself.

\begin{algorithm}[!t]
\caption{Admissibility screen for a constraint-cost experiment}
\label{alg:screen}
\begin{algorithmic}[1]
\Require features $V$, constrained subset $S$, protocol $\Pi$, metric $\Fone$,
slack $\hat s$ (or a grid $\mathcal K$ to estimate it)
\State fit unconstrained model on $V$; record $\Fone^{\mathrm{unc}}$
\State fit ablated model on $V\setminus S$; record $\Fone^{\mathrm{abl}}$
\State $\hat\head\gets \Fone^{\mathrm{unc}}-\Fone^{\mathrm{abl}}$
       \Comment{certified ceiling, Thm.~\ref{thm:main}}
\If{$\hat\head<\hat s$}
  \State \textbf{report} ``price of any prior on $S$ is unidentifiable under $\Pi$''
  \State \textbf{stop} \Comment{no constrained training required}
\EndIf
\State fit constrained model; record $\Fone^{\mathrm{con}}$
\If{$\Fone^{\mathrm{con}}<\Fone^{\mathrm{abl}}-\hat s$}
  \State \textbf{flag} solver or estimation failure
       \Comment{violates Cor.~\ref{cor:dominance}}
\EndIf
\State \textbf{report} $(\hat\price,\hat\head,\hat s)$ \emph{and} the free
       feature set $V\setminus S$
\end{algorithmic}
\end{algorithm}

\subsection{Compliance is not price}

\begin{proposition}[Independence of violation rate and price]
\label{prop:vr}
For every $\delta>0$ there exist $(\Dc,\Hc,\sigma,\mu)$ with $\price(\sigma)=0$
and $\VR(\hat f;j,\sigma_j)\ge1-\delta$ for some risk minimizer
$\hat f\in\argmin_{\Hc}\Risk$. Symmetrically there exist instances with
$\VR(\hat f;j,\sigma_j)=0$ and $\price(\sigma')>0$ for a different prior on the
same coordinate.
\end{proposition}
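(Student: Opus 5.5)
The plan is to prove both halves by explicit constructions, taking $d=1$ and small finite distributions so that every infimum can be computed by hand. Throughout I use $\Risk(f)=\Pr[\mathbf 1\{f(x)>0\}\neq y]$ as in Proposition~\ref{prop:sharp}. Under this risk only the sign pattern of $f$ on the support of $\Dc$ matters. The violation rate, by contrast, is computed from finite differences $D_j$ under an arbitrary evaluation distribution $\mu$. The proof exploits exactly this gap.

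\emph{First half ($\price=0$, $\VR\ge1-\delta$).} Take $\Xc_1=[0,1]$ and let $\Dc$ put all mass on the single point $(x,y)=(1,1)$, so the $x$-marginal is the Dirac mass at $1$. Let $\Hc$ be all real functions on $\Xc_1$ and $\sigma=(+1)$.
\begin{itemize}
\item The constant $f\equiv1$ lies in $\Hblind{\{1\}}\subseteq\Hsig\sigma$ by Lemma~\ref{lem:degen} and has risk $0$. Hence $\Rstar(\Hsig\sigma)=\Rstar(\Hc)=0$ and $\price(\sigma)=0$.
\item The minimizer I pick instead is $\hat f(x)=2-x$. It is positive at $x=1$, so it also has risk $0$ and lies in $\argmin_\Hc\Risk$.
\item It is strictly decreasing, so $D_1(x)=-2h_1<0$ everywhere, with $|D_1|=2h_1$.
\item Choose $\mu$ uniform on $[h_1,1-h_1]$, which keeps $x\pm h_1$ inside the domain. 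Choose $h_1$ so that $2h_1>\tau=\varepsilon\,\mathrm{range}(\hat f)=\varepsilon$; this holds for any $h_1>\varepsilon/2$.
\end{itemize}
Then the conditioning event $|D_1|>\tau$ has probability $1$, every active difference has the wrong sign, and $\VR(\hat f;1,+1)=1\ge1-\delta$.

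If the statement is read as requiring $\mu$ to be the $\Dc$-marginal, the Dirac marginal makes the conditioning event degenerate. In that case I would spread the marginal uniformly over $[0,1]$ with $y=1$ everywhere. Then $\Rstar(\Hc)=0$ is attained both by the constant $1$ and by $\hat f(x)=2-x$, which is positive on $[0,1]$, and the same computation gives $\VR=1$.

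\emph{Second half ($\VR=0$, $\price(\sigma')>0$).} Reuse the distribution of Proposition~\ref{prop:sharp}: $\Xc_1=\{0,1\}$ and $\Dc$ uniform on $\{(0,1),(1,0)\}$.
\begin{itemize}
\item The minimizer $\hat f(x)=\tfrac12-x$ has risk $0$ and is non-increasing. Every active difference is negative, so $\VR(\hat f;1,-1)=0$. Here $\mu$ is taken on the grid so that $x\pm h_1\in\{0,1\}$, for example $h_1=\tfrac12$ with $\mu$ at the midpoint, after extending $\Xc_1$ to $\{0,\tfrac12,1\}$ with zero $\Dc$-mass at $\tfrac12$.
\item Take the prior $\sigma'=(+1)$ on the same coordinate. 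Proposition~\ref{prop:sharp} gives $\price(\sigma')=\tfrac12>0$.
\end{itemize}

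The main obstacle is bookkeeping around the violation-rate definition. The finite difference must be well defined, which forces $x\pm h_je_j$ to stay in $\Xc$. The activity threshold $\tau$ must not exclude all mass, since the conditional probability has to be defined. The paper's default $h_j=0.05\,\mathrm{IQR}$ must be reconciled with the free choice of $\mu$. I will handle these by stating the choice of $h_j$ and $\mu$ explicitly in each instance, and noting that $\tau=\varepsilon\,\mathrm{range}(\hat f)$ is below $|D_j|$ in both constructions. Everything else follows from Lemma~\ref{lem:degen} and Proposition~\ref{prop:sharp}.
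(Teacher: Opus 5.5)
Your proof is correct and uses the same mechanism as the paper's sketch. A decision-based risk ignores score perturbations that flip no decision, so an optimal $S$-blind hypothesis exists (giving $\price(\sigma)=0$ via Lemma~\ref{lem:degen}) while another minimizer can decrease in $x_1$ wherever $\mu$ looks; your second half is exactly the paper's ``$y$ monotone in $x_1$, opposite sign'' argument via Proposition~\ref{prop:sharp}. The paper works with $d=2$ and an irrelevant $x_1$ so that the violations sit on the data support, which your constant-label fallback reproduces; in that fallback, enlarge $\Xc_1$ slightly (e.g.\ to $[-h_1,1+h_1]$) or take $\mu$ uniform on $[h_1,1-h_1]$, so that $x\pm h_1$ stays in the domain.
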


\begin{proof}[Proof sketch]
Take $d=2$, $y$ a deterministic function of $x_2$ alone, $x_1$ independent of
$(x_2,y)$. Every $f(x)=g(x_2)+\epsilon h(x_1)$ attains the same risk for a
decision-based metric provided $\epsilon$ is small enough not to change any
argmax. The minimizer is therefore not unique, and one may choose $h$ strictly
decreasing on a set of $\mu$-measure $\ge1-\delta$, giving $\VR\ge1-\delta$ under
$\sigma_1=+1$. Since $\Hblind{\{1\}}$ contains a risk-optimal hypothesis,
$\head(\{1\})=0$ and Theorem~\ref{thm:main} forces $\price(\sigma)=0$. The
symmetric statement follows by making $y$ depend monotonically on $x_1$ and
choosing the opposite sign.
\end{proof}

Proposition~\ref{prop:vr} dissolves a common rhetorical move. Some papers argue a
model ``has learned the physics'' because enforcing the physics is cheap; others
argue the opposite because the unconstrained model violates the physics often.
Both inferences are invalid: $\VR$ does not appear in \eqref{eq:main}, and the two
quantities can be set independently. A high $\VR$ with a small $\price$ is the
generic situation whenever the constrained features are partially redundant, and it
is exactly what we measure (Section~\ref{sec:compliance}).

\section{Instantiation: Mapping the Theory onto a Model Zoo}
\label{sec:instantiation}
The bound is only useful if the abstract classes correspond to models one actually
fits. Table~\ref{tab:mapping} gives the correspondence used throughout our
experiments. All arms share one architecture --- a cumulative ordinal ensemble of
$K-1$ histogram gradient-boosted binary classifiers~\cite{friedman2001} with
identical hyperparameters ($300$ iterations, learning rate $0.06$, $31$ leaves,
minimum $40$ samples per leaf, $L_2$ regularization $1.0$, no early stopping,
seed $0$) --- and differ only in feature set and in the per-feature monotonicity
specification, which is a hard constraint of the split finder rather than a
penalty. Consequently the nesting of Fig.~\ref{fig:nesting} holds \emph{exactly}
at the level of representable functions, and no penalty weight is tuned anywhere.

\begin{table}[!t]
\renewcommand{\arraystretch}{1.2}
\caption{Correspondence between the theory of Section~\ref{sec:setup} and the
fitted models. $D=\{\textit{temp},\textit{rh},\textit{wind},\textit{precip}\}$
are the meteorological drivers, $G=\{\mathrm{lat},\mathrm{lon}\}$ the geographic
shield, $c$ the residualized auxiliary channel (insolation).
$\sigma^{\mathrm{nv}}=(+1,-1,+1,-1)$ is the textbook (naive) sign pattern,
$\sigma^{\mathrm{sc}}$ the fold-wise screened one.}
\label{tab:mapping}
\centering
\footnotesize
\begin{tabular}{@{}llll@{}}
\toprule
Model & Features & Constraint & Role \\
\midrule
\MAJ        & ---        & ---                     & $\Hblind{D}$ for \textsf{PHYS*} \\
\GEO        & $G$        & none                    & $\Hblind{D}$ for \textsf{MONO*}, \textsf{MIB*} \\
\WX         & $D$        & none                    & $\Hc$ for \textsf{PHYS*} \\
\MC         & $D\cup G$  & none                    & $\Hc$ for \textsf{MONO*} \\
\MONOnv     & $D\cup G$  & $\sigma^{\mathrm{nv}}$ on $D$ & $\Hsig{\sigma^{\mathrm{nv}}}$ \\
\MONOsc     & $D\cup G$  & $\sigma^{\mathrm{sc}}$ on $D$ & $\Hsig{\sigma^{\mathrm{sc}}}$; $c$-blind arm \\
\PHYSnv     & $D$        & $\sigma^{\mathrm{nv}}$ on $D$ & unshielded prior \\
\PHYSsc     & $D$        & $\sigma^{\mathrm{sc}}$ on $D$ & unshielded prior \\
\MIB        & $D\cup G\cup\{c\}$ & $\sigma^{\mathrm{sc}}$ on $D$ & $\Cc_0$ in \eqref{eq:ctxspread} \\
\MIBfp      & $D\cup G\cup\{c\}$ & $\sigma^{\mathrm{sc}}$, $c{:}+1$ & $\Cc_{+}$ \\
\MIBfm      & $D\cup G\cup\{c\}$ & $\sigma^{\mathrm{sc}}$, $c{:}-1$ & $\Cc_{-}$ \\
\bottomrule
\end{tabular}
\end{table}

Four families of certified ceilings follow, and they are the falsifiable content
of the empirical section.

\begin{proposition}[Instantiated ceilings]
\label{prop:ceilings}
At the population level, for every protocol:
\begin{align}
\Fone(\textsf{MONO*}),\ \Fone(\textsf{MIB*})
   &\ge \Fone(\GEO), \tag{R1}\label{eq:R1}\\
\Fone(\textsf{PHYS*})
   &\ge \Fone(\textsf{best constant}), \tag{R2}\label{eq:R2}\\
\Fone(\textsf{MIB*})
   &\ge \Fone(\MONOsc), \tag{R3}\label{eq:R3}\\
\Fone(\MIB)
   &\ge \Fone(\MIBfp),\ \Fone(\MIBfm), \tag{R4}\label{eq:R4}\\
\Fone(\MC)
   &\ge \Fone(\textsf{MONO*}), \notag\\
\Fone(\WX)
   &\ge \Fone(\textsf{PHYS*}). \tag{R5}\label{eq:R5}
\end{align}
Moreover
$\max_{\text{ctx sign}}\Fone-\min_{\text{ctx sign}}\Fone
\le\Fone(\MIB)-\Fone(\MONOsc)$.
\end{proposition}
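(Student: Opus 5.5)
Every one of (R1)--(R5) is a certified comparison in the sense of Definition~\ref{def:cert}. The plan is to exhibit, for each inequality, an inclusion of hypothesis classes and then apply monotonicity of the infimum, as in Theorem~\ref{thm:main}, with $\Risk=-\Fone_\Pi$ for an arbitrary protocol $\Pi$. Corollary~\ref{cor:protocol} allows a cross-validated metric to serve as the risk functional, so one argument covers every protocol. The one modelling step is to read each arm of Table~\ref{tab:mapping} as a class of functions on the common input space $\Xc$ over $D\cup G\cup\{c\}$. An arm that uses only features $F$ is the set of gradient-boosted score functions that never split outside $F$, with its monotonicity specification imposed as a split-finder constraint. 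Because all arms share one architecture, "does not use feature $j$" means exactly "is constant in $x_j$". This is the constructive verification of Assumption~\ref{as:closure} already given after that assumption.

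The inclusions then go as follows.
\begin{enumerate}
\item (R1): \GEO{} is the $D$-blind, $c$-blind subclass. By Lemma~\ref{lem:degen} it lies in every \textsf{MONO*} class, whatever the sign pattern on $D$. For \textsf{MIB*}, the same lemma applied to the support $D$ together with $c$, constrained or not, puts \GEO{} inside \MIB, \MIBfp{} and \MIBfm.
\item (R2): \MAJ{} is the class of constants (the $D$-blind class for \textsf{PHYS*}), which Lemma~\ref{lem:degen} again places inside \PHYSnv{} and \PHYSsc.
\item (R3): \MONOsc{} is precisely the $c$-blind subclass of each \textsf{MIB*}, with the same $\sigma^{\mathrm{sc}}$ on $D$. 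Constancy in $c$ satisfies the free, $+1$ and $-1$ specifications on $c$ alike.
\item (R4): fixing a sign on $c$ only shrinks the feasible set, so $\Cc_\pm\subseteq\Cc_0$.
\item (R5): dropping monotonicity constraints enlarges the class, giving $\textsf{MONO*}\subseteq\MC$ and $\textsf{PHYS*}\subseteq\WX$.
\end{enumerate}
The final "moreover" statement is then exactly \eqref{eq:ctxspread} of Corollary~\ref{cor:spread}, taking $\Cc_0=\MIB$ and the $c$-blind class to be \MONOsc. Its proof combines (R3) for the lower bound with (R4) for the upper bound.

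The main obstacle is interpretive, not technical. The arms are single fitted models, not infima, which is why the statement is explicitly made at the population level, meaning the class optimum $\Rstar$. I will state that identification once and note that empirical inversions are handled by Corollary~\ref{cor:slack}.

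A subtler point sits in (R2). \textsf{best constant} must be read as an element of the \textsf{PHYS*} class. Under the cumulative ordinal parameterization with isotonization in $k$, a constant $\hat S_k$ vector decodes to a single fixed label, and any fixed label is representable. So the inclusion holds provided the decoder of Definition~\ref{def:pq}'s setup is applied uniformly across arms. For \textsf{MIB*}, I also need the residualization producing $c$ to be a fixed preprocessing map. Otherwise a $c$-blind function on the raw inputs might not coincide with \MONOsc. I would make this explicit as a standing convention: residualization is fit outside the compared classes and is shared across arms.
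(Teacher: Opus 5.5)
Your proposal is correct and is essentially the paper's proof. The paper likewise obtains (R1)--(R3) from Corollary~\ref{cor:dominance}, i.e.\ Lemma~\ref{lem:degen} plus monotonicity of the infimum, by identifying \GEO{} with the ensembles on $D\cup G$ that never split on $D$, the constants with the $D$-blind class of the $D$-only architecture, and \MONOsc{} with the $c$-blind subclass of every \textsf{MIB} arm; it then reads (R4) as $\Cc_\pm\subseteq\Cc_0$, (R5) as $\Hsig{\sigma}\subseteq\Hc$, and the final claim as \eqref{eq:ctxspread}. Your two extra conventions --- that every constant label is representable under the cumulative decoder, and that the residualizer is a fixed per-fold map shared across arms --- are left implicit in the paper, and stating them makes the class inclusions more explicit without changing the argument.
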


\begin{proof}
\eqref{eq:R1}--\eqref{eq:R3} are Corollary~\ref{cor:dominance} applied to the
corresponding rows of Table~\ref{tab:mapping}: within the $D\cup G$ architecture
$\Hblind{D}$ is exactly the class of ensembles that never split on $D$, i.e.\ the
\GEO{} class, and within the $D$-only architecture it is the constants; the
$c$-blind subclass of any \textsf{MIB} arm is the \MONOsc{} class, since an
ensemble that never splits on $c$ satisfies any sign specification on $c$.
\eqref{eq:R4} is $\Cc_\pm\subseteq\Cc_0$ and \eqref{eq:R5} is
$\Hsig\sigma\subseteq\Hc$. The final claim is \eqref{eq:ctxspread}.
\end{proof}

\begin{remark}[\eqref{eq:R2} is deliberately conservative]
\label{rem:cons}
We test a \emph{loosened} ceiling, so that a violation is correspondingly stronger
evidence of pipeline error. Under macro-$\Fone$ the majority-class predictor is not
optimal among constants --- predicting a rare class can raise the macro average ---
so $\Fone(\MAJ)\le\Fone(\textsf{best constant})$. \eqref{eq:R1} is conservative for
the \textsf{MIB} arms for the same reason: their true $D$-ablation is
``coordinates plus a free channel,'' which contains the \GEO{} class.
\end{remark}

\begin{remark}[\MIB{} versus \MC{} is not a constraint cost]
\label{rem:notmatched}
\MIB{} carries a feature \MC{} does not, so neither class contains the other and
no certified ordering holds between them. Differences we report in that column are
the \emph{joint} effect of the $D$-prior and the added channel, not a price; an
unconstrained model on $D\cup G\cup\{c\}$ was not fitted, so a feature-matched
reference for the \textsf{MIB} arms does not exist in this zoo. This is why
\MIB{} exceeding \MC{} in Table~\ref{tab:main} is not an inversion, and why we
exclude those cells from \eqref{eq:slack}.
\end{remark}

\begin{remark}[A degeneracy check built into the harness]
\label{rem:degencheck}
When the exogeneity gate of Section~\ref{sec:design} rejects all auxiliary
candidates, the channel $c$ is absent and Lemma~\ref{lem:degen} predicts that
\MIB{}, \MIBfp{} and \MIBfm{} must be \emph{identical} to \MONOsc{}, not merely
close: constraining the sign of a feature that does not exist is vacuous. Our
implementation asserts bytewise equality of out-of-fold predictions in this case
and aborts otherwise. The assertion passes in all three protocols of both
gate-empty configurations (\textsf{closed} and \textsf{drv}), and signature
deduplication reports the arms as aliases rather than as small insignificant
differences. This converts a corollary of the theorem into a runtime test of the
constraint plumbing, and it is the cheapest such test we know.
\end{remark}

\section{Experimental Design}
\label{sec:design}
The experiments are not intended to demonstrate a new method; they calibrate the
quantities of Theorem~\ref{thm:main} and measure $\hat s$ on a real environmental
prediction task where monotone physical priors are uncontroversial.

\subsection{Data and task}
We use the Eurasian wildfire corpus of~\cite{kriuk2025eurasia}: georeferenced fire
records joined to reanalysis meteorology at the time and place of detection. The
target is an \emph{ordered} severity/type variable with $K=3$ levels, declared in
increasing severity as \textsf{Controlled burn} $\prec$
\textsf{Uncontrolled burn} $\prec$ \textsf{Wildfire}, so the ordinal cumulative
parameterization of Section~\ref{sec:setup} applies. After class filtering and
completeness filtering on drivers and coordinates, $N=26\,681$ records remain,
with class counts $[333,\,2711,\,23\,637]$ (shares $0.0125$, $0.1016$, $0.8859$)
and a majority-class macro-$\Fone$ baseline of $0.3132$. Drivers are
$D=\{\textit{temp},\textit{rh},\textit{wind},\textit{precip}\}$; the shield is
$G=\{\mathrm{lat},\mathrm{lon}\}$.

Two properties of the corpus constrain what this design can test: the absence of a
parsable date column, so there is no temporal rung on the ladder and calendar
harmonics are unavailable to the exogeneity gate; and the near-surjectivity of
coordinates on records, so grouping by exact coordinate removes little information
relative to i.i.d.\ resampling. Effectively the ladder has two informative rungs:
unblocked (\textsf{iid}, \textsf{point}) and spatially blocked.

\subsection{Priors}
The textbook pattern is $\sigma^{\mathrm{nv}}=(+1,-1,+1,-1)$ on
$(\textit{temp},\textit{rh},\textit{wind},\textit{precip})$. The screened pattern
$\sigma^{\mathrm{sc}}$ is obtained \emph{inside each training fold} as the sign of
the mean central finite-difference partial effect of the unconstrained \MC{}
model, with a $400$-replicate clustered bootstrap interval reported per fold so
that non-identified signs are visible rather than silently adopted. Screening on
the training fold only is essential: screening on the pooled data would leak the
test fold into the constraint specification. In the headline configuration
$\sigma^{\mathrm{sc}}$ coincided bytewise with $\sigma^{\mathrm{nv}}$ in $9$ of
$15$ training folds ($2/5$ under \textsf{iid}, $4/5$ under \textsf{point}, $3/5$
under \textsf{spatial}), and \MONOsc{} was consequently fitted as \MONOnv{} in
those folds, detected by signature deduplication.

\subsection{Auxiliary channel and its exogeneity gate}
Candidate auxiliary channels (dewpoint, apparent temperature, pressure, cloud
cover, insolation, and calendar harmonics when a parsable date exists) are
admitted only if they are not reconstructible from the drivers. Each candidate $c$
is regressed on a quadratic design in $D$ --- intercept, linear, squares and all
pairwise products --- and admitted iff $R^2\le0.90$ and the residual relative
standard deviation is $\ge0.10$. Only one candidate is present in this corpus,
insolation, and it is admitted comfortably: $R^2=0.2646$, residual relative
standard deviation $0.8576$. Admitted channels enter the \textsf{MIB} arms in
\emph{residualized} form, $c-A\hat b$, with $\hat b$ estimated on training data
only. The motivation is that a partially monotone certificate on $D$ is
meaningless if a free channel is a smooth function of $D$: the deployed pipeline
can then reintroduce arbitrary dependence on $D$ through the free channel, which
is exactly what $\VR_{\text{e2e}}$ detects. The gate opened in every one of the
$15$ training folds of every gate-auto configuration, a fact that turns out to
neutralize our screening-leak probe (Section~\ref{sec:leak}).

\subsection{Validation ladder}
Three protocols in increasing severity: \textsf{iid} stratified $5$-fold;
\textsf{point}, grouping by exact coordinate so no location is split across folds;
and \textsf{spatial}, grouping by blocks of a chosen angular size ($429$ blocks at
$2^\circ$, $1080$ at $1^\circ$, $115$ at $5^\circ$, $40$ at $10^\circ$). Grouped
folds use stratified group $k$-fold. This ladder is the standard remedy for
autocorrelation-inflated skill~\cite{roberts2017,ploton2020,valavi2019}.

\subsection{Configuration grid}
Because $\hat s$ in \eqref{eq:slack} is a maximum over configurations, we run the
full ladder under each of the settings in Table~\ref{tab:grid}. Nine tags were
executed; three of them (\textsf{ctx}, \textsf{b2}, \textsf{glob}) produce
bytewise identical point estimates, leaving \emph{seven numerically distinct}
configurations. We designate \textsf{ctx} the headline configuration; all figures
are generated from it.

\begin{table}[!t]
\renewcommand{\arraystretch}{1.2}
\caption{Configuration grid $\mathcal K$. Each row is a complete run of the
three-protocol ladder over all eleven arms of Table~\ref{tab:mapping}, with
$5$ folds, seed $0$ and $B=2000$ cluster bootstrap replicates. \textsf{ctx},
\textsf{b2} and \textsf{glob} coincide bytewise; see Section~\ref{sec:leak}.}
\label{tab:grid}
\centering
\footnotesize
\begin{tabular}{@{}lll@{}}
\toprule
Tag & Setting & Purpose \\
\midrule
\textsf{ctx}    & headline: $2^\circ$ blocks, gate auto/fold & headline numbers \\
\textsf{b1}     & $1^\circ$ blocks ($1080$)   & shield strongest \\
\textsf{b2}     & $2^\circ$ blocks ($429$)    & determinism check ($\equiv$\textsf{ctx}) \\
\textsf{b5}     & $5^\circ$ blocks ($115$)    & block sweep \\
\textsf{b10}    & $10^\circ$ blocks ($40$)    & headroom collapse; $\hat s$ probe \\
\textsf{closed} & gate forced closed          & \textsf{MIB*}$\equiv$\MONOsc{} test \\
\textsf{drv}    & insolation as $5$th driver  & driver-set perturbation, $|S|{=}5$ \\
\textsf{glob}   & gate decided on pooled data & screening-leak probe \\
\textsf{k5}     & $K{=}4$ label granularity   & label-resolution perturbation \\
\bottomrule
\end{tabular}
\end{table}

The \textsf{k5} configuration re-splits the majority class, yielding
$K=4$ ordered levels \textsf{Controlled burn} $\prec$
\textsf{Uncontrolled burn} $\prec$ \textsf{Forest fire} $\prec$
\textsf{Natural fire} with counts $[333,\,2711,\,18\,415,\,5195]$ on
$N=26\,654$ records and a majority baseline of $0.2043$. It is included because
sign identifiability turns out to depend far more on label resolution than on
protocol.

\subsection{Inference}
All paired comparisons use one shared set of $B=2000$ cluster bootstrap
replicates~\cite{efron1993}, resampling spatial blocks, so that every pairwise
difference is computed on identical replicates and is therefore directly
comparable; replicates missing a class are discarded. All $2000$ replicates
survived in every configuration and protocol, so no interval in this paper rests
on a degenerate resample. Predictions are pooled out-of-fold and coverage is
asserted. Models whose predictions coincide bytewise are detected by signature
deduplication, fitted once, and reported as identical rather than as a small
insignificant difference --- a distinction that matters because
Lemma~\ref{lem:degen} predicts exact coincidence in specific configurations. Note
that the bootstrap clusters are the same spatial blocks that define the
\textsf{spatial} folds, so the block-size sweep changes interval widths on the
\textsf{iid} and \textsf{point} rungs even though their point estimates are
invariant; the half-width of the \MONOsc{} price under \textsf{iid} grows from
$0.0228$ at $2^\circ$ to $0.0246$ at $10^\circ$.

\section{Results}
\label{sec:results}
Table~\ref{tab:main} is the headline configuration in full; Table~\ref{tab:grid_f1}
is the whole grid. Six runtime falsifiers passed in every configuration:
\WX{} present in all cost tables (C1); intervals computed against all three
tabular references (C2); $\VR_{\text{own}}=0$ for every constrained arm (C3a) and
undefined for every unconstrained arm (C3b); no bootstrap degeneration (C4);
$\textsf{MIB*}\equiv\MONOsc{}$ bytewise whenever the gate is empty (C5); and
complete out-of-fold coverage (C6).

\subsection{Protocol ladder and the collapse of headroom}
Fig.~\ref{fig:ladder} shows macro-$\Fone$ for the key arms across the ladder. The
unconstrained model falls from $0.8316$ (\textsf{iid}) through $0.8311$
(\textsf{point}) to $0.6029$ (\textsf{spatial}), a split-leakage term of
$+0.2287$; under $10^\circ$ blocks it reaches $0.5218$, a term of $+0.3098$.

The quantity that governs everything downstream is not that drop but the vertical
gap between \MC{} and \GEO{}, i.e.\ $\hat\head(D)$: it falls from $0.1288$ under
\textsf{iid} to $0.1165$ under \textsf{point} to $0.0427$ under $2^\circ$ spatial
blocking, a factor of $3.0$. By Corollary~\ref{cor:protocol}, the price of every
prior on $D$ is forced to shrink with it. The apparent disappearance of constraint
cost under strict validation is thus a theorem, not a finding.

\begin{figure}[!t]
\centering
\includegraphics[width=\columnwidth]{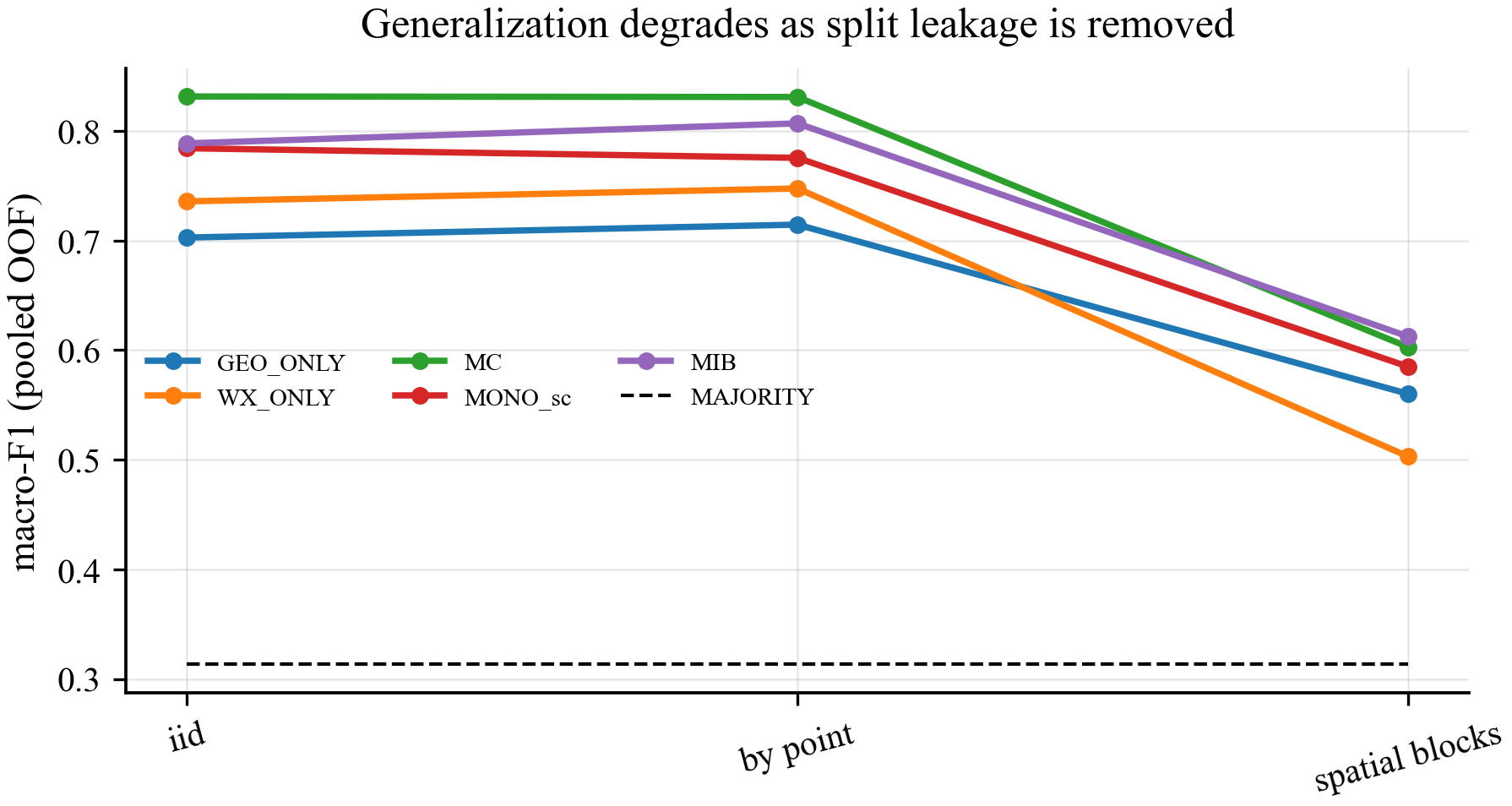}
\caption{Out-of-fold macro-$\Fone$ across the validation ladder, headline
configuration. The gap between \MC{} and \GEO{} is the certified ceiling
$\hat\head(D)$ on the cost of any monotone prior on the drivers
($0.1288\to0.1165\to0.0427$); the gap between \MC{} and \MONOsc{} is the measured
price ($0.0473\to0.0555\to0.0179$). Theorem~\ref{thm:main} requires the \MONOsc{}
curve never to fall below the \GEO{} curve, which it does not. Note that \WX{}
crosses below \GEO{} at the spatial rung: under blocking, coordinates alone beat
all four drivers together by $+0.0572$ $[+0.0252,+0.0814]$.}
\label{fig:ladder}
\end{figure}

\subsection{Price against certified ceiling}
Fig.~\ref{fig:cost} reports paired differences with $95\%$ cluster bootstrap
intervals against the three references of Table~\ref{tab:mapping}. The panel
against \GEO{} is the direct test of Corollary~\ref{cor:dominance}: every
constrained arm on $D\cup G$ must sit to the right of zero, and in the headline
configuration all fifteen such cells do, with margins from $+0.0029$ (\MIBfm,
spatial) to $+0.0941$ (\MIBfp, iid).

Reading Table~\ref{tab:main} as prescribed, the ratio $\hat\price/\hat\head$ is the
fraction of the drivers' value that the prior destroys: $0.367$ for \MONOsc{}
under \textsf{iid}, $0.476$ under \textsf{point}, $0.419$ under \textsf{spatial}.
The corresponding unshielded ratios are $0.821$, $0.822$ and $0.694$. That
contrast is the subject of Section~\ref{sec:shield}.

\begin{figure}[!t]
\centering
\includegraphics[width=\columnwidth]{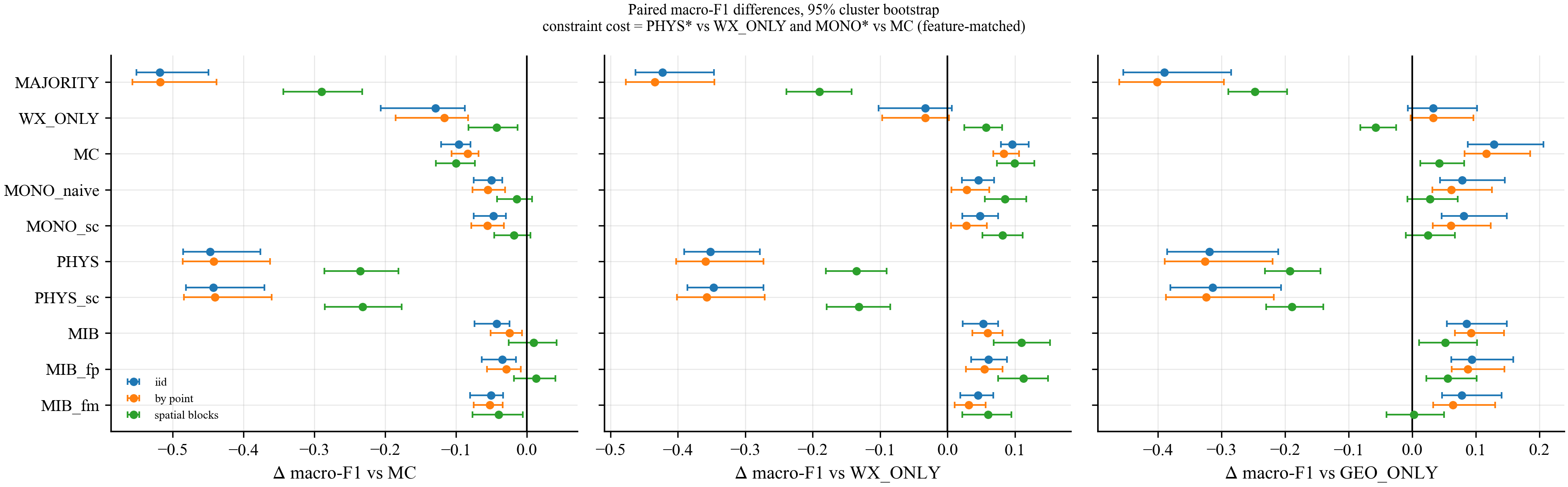}
\caption{Paired macro-$\Fone$ differences with $95\%$ cluster bootstrap intervals,
by protocol, headline configuration. \textsf{PHYS*} versus \WX{} and
\textsf{MONO*} versus \MC{} are the only feature-matched comparisons and hence the
only ones that measure a constraint cost (cf.\ Remark~\ref{rem:notmatched} for the
\textsf{MIB} arms); the panel against \GEO{} is the ablation dominance test of
Corollary~\ref{cor:dominance}, where every constrained arm must lie to the right of
zero. Note how the \textsf{PHYS*} points sit an order of magnitude further left
than the \textsf{MONO*} points against their respective unconstrained references
--- identical physics, different free feature set.}
\label{fig:cost}
\end{figure}

\begin{table*}[!t]
\renewcommand{\arraystretch}{1.15}
\caption{Headline configuration (\textsf{ctx}: $2^\circ$ blocks, gate auto/fold),
all eleven arms, all three protocols. $\hat\head$ is
$\MC-\GEO$ for arms on $D\cup G$ and $\WX-\MAJ$ for arms on $D$ alone
(Remark~\ref{rem:cons}). $\hat\price$ is the negated difference against the
feature-matched unconstrained reference. ``dominance'' is the sign test of
Corollary~\ref{cor:dominance} against the arm's own ablation and must be
$\ge0$. $\dagger$ marks $|\hat\price|<\hat s=0.0220$, i.e.\ cells that carry no
interpretation. $\ddagger$ marks the \textsf{MIB} arms, whose reference is not
feature-matched (Remark~\ref{rem:notmatched}). $\VR$ columns are end-to-end
violation rates against the arm's own declaration, the textbook prior, and the
screened prior.}
\label{tab:main}
\centering
\scriptsize
\begin{tabular}{@{}llccccc c ccc l@{}}
\toprule
Protocol & Model & $\Fone$ (OOF) & $\Fone$ fold $\pm$ sd &
$\hat\head$ & $\hat\price$ & $\hat\price/\hat\head$ & dominance &
$\VR_{\mathrm{own}}$ & $\VR_{\mathrm{nv}}$ & $\VR_{\mathrm{sc}}$ &
$\Fone$ per class \\
\midrule
\multirow{11}{*}{\textsf{iid}}
 & \MAJ    & 0.3132 & 0.3132 $\pm$ 0.0000 & ---    & ---    & ---    & ---      & n/a & n/a & n/a & 0.000 / 0.000 / 0.940 \\
 & \GEO    & 0.7028 & 0.7026 $\pm$ 0.0147 & ---    & ---    & ---    & ---      & n/a & n/a & n/a & 0.406 / 0.733 / 0.969 \\
 & \WX     & 0.7358 & 0.7356 $\pm$ 0.0116 & ---    & ---    & ---    & ---      & n/a & 0.4849 & 0.4953 & 0.597 / 0.646 / 0.964 \\
 & \MC     & 0.8316 & 0.8312 $\pm$ 0.0097 & 0.1288 & ---    & ---    & ---      & n/a & 0.4795 & 0.4957 & 0.700 / 0.816 / 0.979 \\
 & \MONOnv & 0.7815 & 0.7813 $\pm$ 0.0160 & 0.1288 & 0.0501 & 0.389 & $+$0.0787 & 0.0000 & 0.0000 & 0.1572 & 0.605 / 0.766 / 0.973 \\
 & \MONOsc & 0.7843 & 0.7843 $\pm$ 0.0095 & 0.1288 & 0.0473 & 0.367 & $+$0.0815 & 0.0000 & 0.2068 & 0.0000 & 0.602 / 0.776 / 0.974 \\
 & \PHYSnv & 0.3842 & 0.3842 $\pm$ 0.0100 & 0.4226 & 0.3516 & 0.832 & $+$0.0710 & 0.0000 & 0.0000 & 0.0852 & 0.000 / 0.211 / 0.941 \\
 & \PHYSsc & 0.3888 & 0.3885 $\pm$ 0.0138 & 0.4226 & 0.3470 & 0.821 & $+$0.0756 & 0.0000 & 0.3855 & 0.0000 & 0.000 / 0.225 / 0.942 \\
 & \MIB    & 0.7888 & 0.7881 $\pm$ 0.0219 & 0.1288 & 0.0428$^\ddagger$ & 0.332 & $+$0.0860 & 0.0000 & 0.2280 & 0.0000 & 0.586 / 0.803 / 0.978 \\
 & \MIBfp  & 0.7968 & 0.7969 $\pm$ 0.0095 & 0.1288 & 0.0347$^\ddagger$ & 0.269 & $+$0.0941 & 0.0000 & 0.2245 & 0.0000 & 0.616 / 0.797 / 0.977 \\
 & \MIBfm  & 0.7809 & 0.7808 $\pm$ 0.0161 & 0.1288 & 0.0507$^\ddagger$ & 0.394 & $+$0.0781 & 0.0000 & 0.2035 & 0.0000 & 0.593 / 0.776 / 0.974 \\
\midrule
\multirow{11}{*}{\textsf{point}}
 & \MAJ    & 0.3132 & 0.3132 $\pm$ 0.0004 & ---    & ---    & ---    & ---      & n/a & n/a & n/a & 0.000 / 0.000 / 0.940 \\
 & \GEO    & 0.7146 & 0.7143 $\pm$ 0.0203 & ---    & ---    & ---    & ---      & n/a & n/a & n/a & 0.433 / 0.741 / 0.970 \\
 & \WX     & 0.7476 & 0.7471 $\pm$ 0.0132 & ---    & ---    & ---    & ---      & n/a & 0.4922 & 0.4928 & 0.610 / 0.667 / 0.966 \\
 & \MC     & 0.8311 & 0.8309 $\pm$ 0.0196 & 0.1165 & ---    & ---    & ---      & n/a & 0.4799 & 0.4868 & 0.691 / 0.823 / 0.980 \\
 & \MONOnv & 0.7761 & 0.7755 $\pm$ 0.0175 & 0.1165 & 0.0549 & 0.471 & $+$0.0616 & 0.0000 & 0.0000 & 0.0494 & 0.586 / 0.769 / 0.973 \\
 & \MONOsc & 0.7755 & 0.7750 $\pm$ 0.0184 & 0.1165 & 0.0555 & 0.476 & $+$0.0609 & 0.0000 & 0.0747 & 0.0000 & 0.581 / 0.771 / 0.974 \\
 & \PHYSnv & 0.3888 & 0.3888 $\pm$ 0.0086 & 0.4344 & 0.3588 & 0.826 & $+$0.0756 & 0.0000 & 0.0000 & 0.0326 & 0.000 / 0.225 / 0.942 \\
 & \PHYSsc & 0.3907 & 0.3907 $\pm$ 0.0087 & 0.4344 & 0.3569 & 0.822 & $+$0.0775 & 0.0000 & 0.1661 & 0.0000 & 0.000 / 0.230 / 0.942 \\
 & \MIB    & 0.8070 & 0.8066 $\pm$ 0.0098 & 0.1165 & 0.0241$^\ddagger$ & 0.207 & $+$0.0924 & 0.0000 & 0.0880 & 0.0000 & 0.637 / 0.806 / 0.978 \\
 & \MIBfp  & 0.8023 & 0.8021 $\pm$ 0.0056 & 0.1165 & 0.0288$^\ddagger$ & 0.247 & $+$0.0877 & 0.0000 & 0.0912 & 0.0000 & 0.634 / 0.795 / 0.977 \\
 & \MIBfm  & 0.7790 & 0.7786 $\pm$ 0.0115 & 0.1165 & 0.0521$^\ddagger$ & 0.447 & $+$0.0644 & 0.0000 & 0.0826 & 0.0000 & 0.592 / 0.772 / 0.974 \\
\midrule
\multirow{11}{*}{\textsf{spatial}}
 & \MAJ    & 0.3132 & 0.3109 $\pm$ 0.0069 & ---    & ---    & ---    & ---      & n/a & n/a & n/a & 0.000 / 0.000 / 0.940 \\
 & \GEO    & 0.5602 & 0.5544 $\pm$ 0.0617 & ---    & ---    & ---    & ---      & n/a & n/a & n/a & 0.082 / 0.640 / 0.958 \\
 & \WX     & 0.5030 & 0.5045 $\pm$ 0.0628 & ---    & ---    & ---    & ---      & n/a & 0.4904 & 0.4888 & 0.130 / 0.433 / 0.946 \\
 & \MC     & 0.6029 & 0.6014 $\pm$ 0.0721 & 0.0427 & ---    & ---    & ---      & n/a & 0.4825 & 0.4968 & 0.208 / 0.640 / 0.961 \\
 & \MONOnv & 0.5886 & 0.5883 $\pm$ 0.0635 & 0.0427 & 0.0143$^\dagger$ & 0.335 & $+$0.0284 & 0.0000 & 0.0000 & 0.1904 & 0.194 / 0.613 / 0.959 \\
 & \MONOsc & 0.5850 & 0.5856 $\pm$ 0.0639 & 0.0427 & 0.0179$^\dagger$ & 0.419 & $+$0.0248 & 0.0000 & 0.2650 & 0.0000 & 0.183 / 0.613 / 0.959 \\
 & \PHYSnv & 0.3679 & 0.3643 $\pm$ 0.0271 & 0.1898 & 0.1351 & 0.712 & $+$0.0547 & 0.0000 & 0.0000 & 0.0638 & 0.006 / 0.161 / 0.937 \\
 & \PHYSsc & 0.3713 & 0.3692 $\pm$ 0.0326 & 0.1898 & 0.1317 & 0.694 & $+$0.0581 & 0.0000 & 0.3462 & 0.0000 & 0.000 / 0.174 / 0.939 \\
 & \MIB    & 0.6125 & 0.6105 $\pm$ 0.0765 & 0.0427 & $-$0.0096$^{\dagger\ddagger}$ & $-$0.225 & $+$0.0523 & 0.0000 & 0.2885 & 0.0000 & 0.192 / 0.681 / 0.964 \\
 & \MIBfp  & 0.6159 & 0.6102 $\pm$ 0.0886 & 0.0427 & $-$0.0130$^{\dagger\ddagger}$ & $-$0.305 & $+$0.0557 & 0.0000 & 0.2863 & 0.0000 & 0.204 / 0.679 / 0.964 \\
 & \MIBfm  & 0.5630 & 0.5626 $\pm$ 0.0516 & 0.0427 & 0.0399$^\ddagger$ & 0.934 & $+$0.0029 & 0.0000 & 0.2494 & 0.0000 & 0.127 / 0.604 / 0.958 \\
\bottomrule
\end{tabular}
\end{table*}

\subsection{Shielding, measured}
\label{sec:shield}
The shield's strength is $\Fone(\GEO)=0.7028$ under \textsf{iid} against a
majority baseline of $0.3132$, and $0.5602$ under spatial blocking. Expressed as a
fraction of the full model it rises with protocol severity: coordinates alone
recover $84.5\%$ of \MC{} under \textsf{iid} and $92.9\%$ under blocking. Under
blocking they also \emph{beat} the four drivers together by
$+0.0572$ $[+0.0252,+0.0814]$.

The consequence for constraint accounting is stark. The same screened monotone
prior costs
\begin{align*}
0.0473 \quad &\text{shielded:\ \ \MONOsc{} versus \MC{}},\\
0.3470 \quad &\text{unshielded:\ \PHYSsc{} versus \WX{}},
\end{align*}
a ratio of $7.34$ under \textsf{iid}; the ratio is $6.43$ under \textsf{point} and
$7.36$ under \textsf{spatial}, and $7.02$ for the textbook prior under
\textsf{iid}. Nothing about the physics differs between the two arms; only the
free feature set does. Nor is the effect a pure rescaling by $\hat\head$: the
normalized price $\hat\price/\hat\head$ also more than doubles, from $0.367$
shielded to $0.821$ unshielded. Under Proposition~\ref{prop:sharp} the unshielded
ratio of $0.82$ is close to the anti-alignment regime, i.e.\ on the drivers alone
the monotone prior destroys most of what the drivers are worth --- a conclusion
the shielded arm hides completely. Had we reported the shielded configuration
alone --- the natural choice, since coordinates improve absolute performance ---
we would have concluded that the prior is nearly free. This is
Proposition~\ref{prop:shield} in numbers, and it is the single most important
reason we recommend reporting the free feature set alongside every
constraint-cost figure.

The block-size sweep isolates the mechanism, and inverts the naive expectation.
Coarsening blocks might be expected to weaken the coordinate shield and so
\emph{raise} $\hat\head(D)$; instead $\Fone(\GEO)$ moves only from $0.5592$ to
$0.5168$ while $\Fone(\MC)$ collapses from $0.6534$ to $0.5218$
(Table~\ref{tab:grid_f1}, \textsf{spatial} rows), so
\[
\hat\head(D):\quad 0.0942 \;\to\; 0.0427 \;\to\; 0.0184 \;\to\; 0.0050 .
\]
Coordinates transfer across coarse block boundaries far better than the drivers
do: $\Fone(\WX)$ falls from $0.5686$ to $0.4410$ over the same sweep. What
Proposition~\ref{prop:shield} guarantees is monotonicity of $\head$ in the
\emph{feature sets}, not in the protocol; Corollary~\ref{cor:protocol} is what
governs the sweep, and it bites hard. The measured price tracks the ceiling down
and then inverts: $0.0456$, $0.0179$, $0.0229$, $-0.0139$ for \MONOsc{}. At
$10^\circ$ the ceiling $0.0050$ is a fifth of $\hat s$, so the last two entries
are not measurements of anything.

\begin{table*}[!t]
\renewcommand{\arraystretch}{1.12}
\caption{Complete out-of-fold macro-$\Fone$ across the grid. \textsf{ctx} is the
headline configuration; \textsf{b2} and \textsf{glob} reproduce it bytewise and
are not repeated. Rows marked $\ast$ have an empty exogeneity gate, so
$\textsf{MIB*}\equiv\MONOsc$ by Lemma~\ref{lem:degen} (verified bytewise).
$\hat\head_{G}=\MC-\GEO$ is the certified ceiling for the shielded arms and
$\hat\head_{M}=\WX-\MAJ$ for the unshielded arms. Because block size also sets the
bootstrap clustering, the \textsf{iid} and \textsf{point} rows of
\textsf{b1}/\textsf{b5}/\textsf{b10} reproduce \textsf{ctx} exactly in point
estimate and differ only in interval width.}
\label{tab:grid_f1}
\centering
\scriptsize
\begin{tabular}{@{}ll ccc cc cc ccc cc@{}}
\toprule
Config & Protocol & \MAJ & \GEO & \WX & \MC & \MONOnv & \MONOsc & \PHYSnv & \PHYSsc & \MIB & \MIBfp & \MIBfm & $\hat\head_{G}$ / $\hat\head_{M}$ \\
\midrule
\multirow{3}{*}{\textsf{ctx}}
 & \textsf{iid}     & 0.3132 & 0.7028 & 0.7358 & 0.8316 & 0.7815 & 0.7843 & 0.3842 & 0.3888 & 0.7888 & 0.7968 & 0.7809 & 0.1288 / 0.4226 \\
 & \textsf{point}   & 0.3132 & 0.7146 & 0.7476 & 0.8311 & 0.7761 & 0.7755 & 0.3888 & 0.3907 & 0.8070 & 0.8023 & 0.7790 & 0.1165 / 0.4344 \\
 & \textsf{spatial} & 0.3132 & 0.5602 & 0.5030 & 0.6029 & 0.5886 & 0.5850 & 0.3679 & 0.3713 & 0.6125 & 0.6159 & 0.5630 & 0.0427 / 0.1898 \\
\midrule
\multirow{3}{*}{\textsf{b1}}
 & \textsf{iid}     & 0.3132 & 0.7028 & 0.7358 & 0.8316 & 0.7815 & 0.7843 & 0.3842 & 0.3888 & 0.7888 & 0.7968 & 0.7809 & 0.1288 / 0.4226 \\
 & \textsf{point}   & 0.3132 & 0.7146 & 0.7476 & 0.8311 & 0.7761 & 0.7755 & 0.3888 & 0.3907 & 0.8070 & 0.8023 & 0.7790 & 0.1165 / 0.4344 \\
 & \textsf{spatial} & 0.3132 & 0.5592 & 0.5686 & 0.6534 & 0.6061 & 0.6078 & 0.3711 & 0.3996 & 0.6182 & 0.6381 & 0.5982 & 0.0942 / 0.2554 \\
\midrule
\multirow{3}{*}{\textsf{b5}}
 & \textsf{iid}     & 0.3132 & 0.7028 & 0.7358 & 0.8316 & 0.7815 & 0.7843 & 0.3842 & 0.3888 & 0.7888 & 0.7968 & 0.7809 & 0.1288 / 0.4226 \\
 & \textsf{point}   & 0.3132 & 0.7146 & 0.7476 & 0.8311 & 0.7761 & 0.7755 & 0.3888 & 0.3907 & 0.8070 & 0.8023 & 0.7790 & 0.1165 / 0.4344 \\
 & \textsf{spatial} & 0.3132 & 0.5670 & 0.4975 & 0.5854 & 0.5846 & 0.5625 & 0.3443 & 0.3865 & 0.5696 & 0.5847 & 0.5516 & 0.0184 / 0.1844 \\
\midrule
\multirow{3}{*}{\textsf{b10}}
 & \textsf{iid}     & 0.3132 & 0.7028 & 0.7358 & 0.8316 & 0.7815 & 0.7843 & 0.3842 & 0.3888 & 0.7888 & 0.7968 & 0.7809 & 0.1288 / 0.4226 \\
 & \textsf{point}   & 0.3132 & 0.7146 & 0.7476 & 0.8311 & 0.7761 & 0.7755 & 0.3888 & 0.3907 & 0.8070 & 0.8023 & 0.7790 & 0.1165 / 0.4344 \\
 & \textsf{spatial} & 0.3132 & 0.5168 & 0.4410 & 0.5218 & 0.5419 & 0.5357 & 0.3439 & 0.3569 & 0.5488 & 0.5411 & 0.5460 & 0.0050 / 0.1279 \\
\midrule
\multirow{3}{*}{\textsf{closed}$^\ast$}
 & \textsf{iid}     & 0.3132 & 0.7028 & 0.7358 & 0.8316 & 0.7815 & 0.7843 & 0.3842 & 0.3888 & 0.7843 & 0.7843 & 0.7843 & 0.1288 / 0.4226 \\
 & \textsf{point}   & 0.3132 & 0.7146 & 0.7476 & 0.8311 & 0.7761 & 0.7755 & 0.3888 & 0.3907 & 0.7755 & 0.7755 & 0.7755 & 0.1165 / 0.4344 \\
 & \textsf{spatial} & 0.3132 & 0.5602 & 0.5030 & 0.6029 & 0.5886 & 0.5850 & 0.3679 & 0.3713 & 0.5850 & 0.5850 & 0.5850 & 0.0427 / 0.1898 \\
\midrule
\multirow{3}{*}{\textsf{drv}$^\ast$}
 & \textsf{iid}     & 0.3132 & 0.7028 & 0.7807 & 0.8427 & 0.7956 & 0.7996 & 0.4208 & 0.4229 & 0.7996 & 0.7996 & 0.7996 & 0.1399 / 0.4675 \\
 & \textsf{point}   & 0.3132 & 0.7146 & 0.7850 & 0.8443 & 0.7965 & 0.7991 & 0.4282 & 0.4293 & 0.7991 & 0.7991 & 0.7991 & 0.1297 / 0.4718 \\
 & \textsf{spatial} & 0.3132 & 0.5602 & 0.5371 & 0.6207 & 0.6018 & 0.6136 & 0.4133 & 0.4263 & 0.6136 & 0.6136 & 0.6136 & 0.0606 / 0.2239 \\
\midrule
\multirow{3}{*}{\textsf{k5} ($K{=}4$)}
 & \textsf{iid}     & 0.2043 & 0.6701 & 0.6763 & 0.7904 & 0.7361 & 0.7361 & 0.2979 & 0.2979 & 0.7549 & 0.7448 & 0.7385 & 0.1203 / 0.4720 \\
 & \textsf{point}   & 0.2043 & 0.6684 & 0.6887 & 0.7987 & 0.7288 & 0.7288 & 0.2956 & 0.2956 & 0.7616 & 0.7428 & 0.7355 & 0.1303 / 0.4844 \\
 & \textsf{spatial} & 0.2043 & 0.4608 & 0.4112 & 0.5262 & 0.5058 & 0.4912 & 0.2935 & 0.2912 & 0.5148 & 0.5071 & 0.5074 & 0.0655 / 0.2069 \\
\bottomrule
\end{tabular}
\end{table*}

\subsection{Feature-set and label-set perturbations}
Two configurations perturb $S$ and $\Yc$ rather than the split, and both behave as
Proposition~\ref{prop:shield} requires.

Promoting insolation from auxiliary channel to fifth driver (\textsf{drv})
enlarges $S$ from four to five features. The ceiling rises accordingly, from
$0.1288$ to $0.1399$ under \textsf{iid} and from $0.4226$ to $0.4675$ for the
unshielded arms, while the price rises less ($0.0473\to0.0431$ shielded,
$0.3470\to0.3578$ unshielded), so the normalized price falls to $0.308$ shielded
and $0.765$ unshielded. The channel that the exogeneity gate classified as
auxiliary is in fact the single best-identified driver in the whole study: its
sign is positive in $15/15$ folds, agrees with the textbook prior in $15/15$, and
its bootstrap interval excludes zero in $12/15$ --- against $0$--$6/15$ for the
four meteorological drivers. Exogeneity ($R^2=0.2646$ on the driver design) is a
statement about redundancy, not about predictive value, and the gate is doing the
job it was designed for rather than the job of feature selection.

Refining the label set to $K=4$ (\textsf{k5}) leaves the ceiling essentially
unchanged ($0.1203$, $0.1303$, $0.0655$ across the ladder) but raises the price
sharply: $\hat\price/\hat\head$ for \MONOsc{} becomes $0.451$, $0.537$ and $0.534$
versus $0.367$, $0.476$ and $0.419$ at $K=3$. Under
Proposition~\ref{prop:sharp} this is the expected reading: at finer label
resolution the monotone prior is a worse approximation to the mechanism.
Consistently, this is also the configuration in which the drivers' signs become
identifiable (Table~\ref{tab:signs}).

\subsection{Sign identification, and why its stakes are bounded}
Fig.~\ref{fig:signs} shows fold-wise mean partial effects with clustered
intervals; Table~\ref{tab:signs} aggregates. In the headline configuration the
screened sign agrees with the textbook sign in $53$ of $60$ driver-folds
($88.3\%$) and the interval excludes zero in only $13$ of $60$ ($21.7\%$).
Temperature and relative humidity agree in $15/15$ folds each; wind in $14/15$;
precipitation in only $9/15$, with the screened sign coming out \emph{positive}
--- against the textbook $-1$ --- in $6$ of $15$ folds while being identified in
just $2$. Precipitation is the one driver on which a practitioner would face a
genuine textbook-versus-data dispute, and it is precisely the driver the data
cannot resolve.

By Corollary~\ref{cor:spread} the accuracy at stake in that dispute is bounded by
$\hat\head=0.0427$ under spatial blocking and $0.1288$ under \textsf{iid},
regardless of how it is resolved --- and by $0.0050$ at $10^\circ$ blocks, where it
is unresolvable in principle. Note that this is a bound on \emph{accuracy}
consequences only; a misspecified sign can still be arbitrarily damaging out of
distribution, which no in-distribution metric detects.

Table~\ref{tab:signs} also exposes an artefact that deserves its own warning.
``Identified'' counts folds whose $95\%$ clustered interval excludes zero, and the
clusters are the spatial blocks. Refining blocks therefore manufactures
identification: temperature is identified in $9/15$ folds at $1^\circ$
($1080$ clusters), $5/15$ at $2^\circ$ ($429$), $4/15$ at $5^\circ$ ($115$) and
$6/15$ at $10^\circ$ ($40$). Sign identifiability is thus partly a property of the
analyst's blocking choice, not of the physics. What is \emph{not} an artefact is
the effect of label resolution: at $K=4$ every driver's mean effect grows by a
factor of $2$--$40$ and precipitation becomes identified in $15/15$ folds with
mean effect $-0.01144$ against $-0.00028$ at $K=3$.

\begin{figure}[!t]
\centering
\includegraphics[width=\columnwidth]{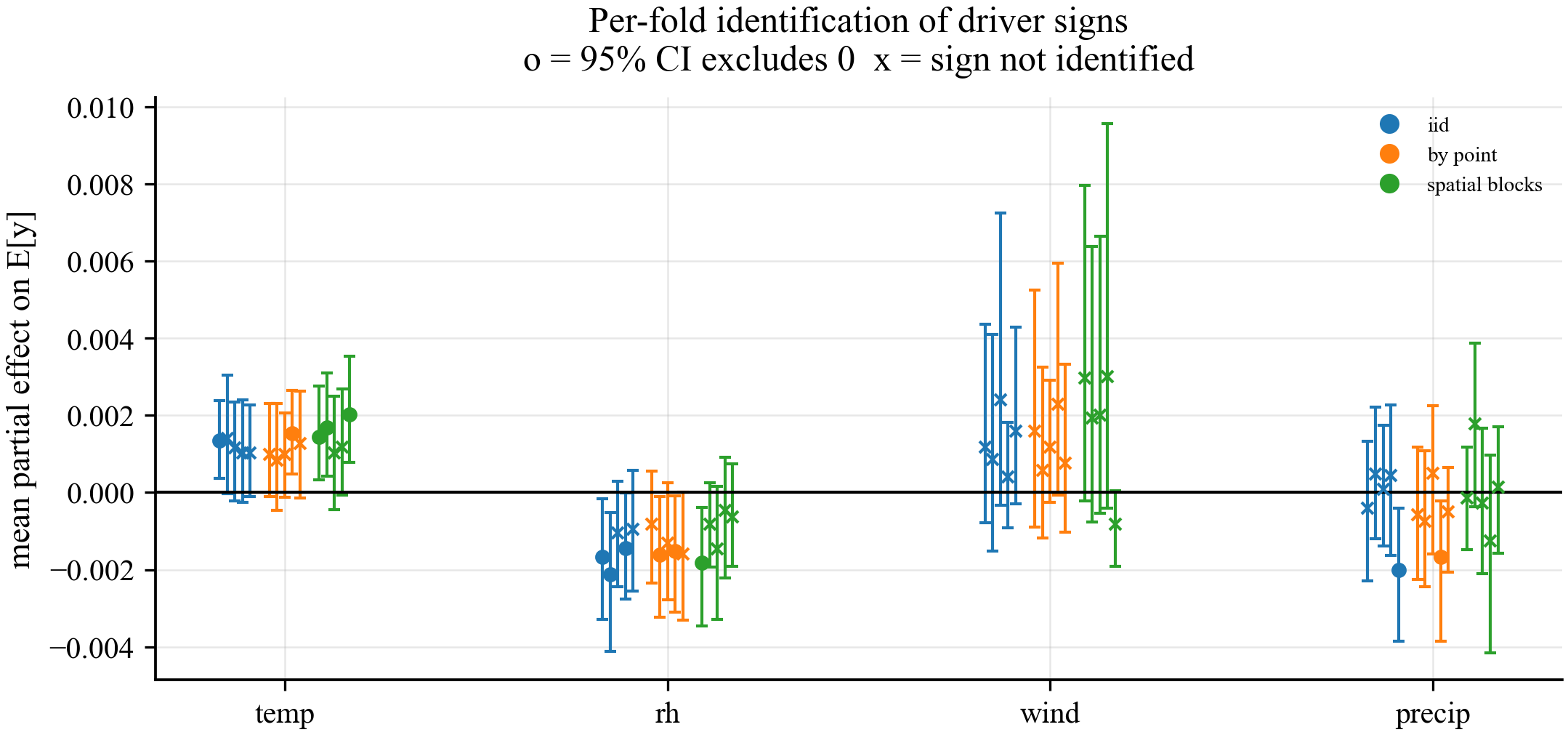}
\caption{Fold-wise identification of driver signs, headline configuration: mean
central-difference partial effect of the unconstrained model with $95\%$ clustered
intervals, three protocols $\times$ five folds per driver. Circles mark folds
where the interval excludes zero ($13$ of $60$); crosses mark non-identified signs.
Temperature and wind sit above zero, relative humidity below; precipitation
straddles zero, which is why its screened sign contradicts the textbook prior in
$6$ of $15$ folds. A non-identified sign implies little headroom and hence, by
Corollary~\ref{cor:spread}, little accuracy at stake in choosing it.}
\label{fig:signs}
\end{figure}

\begin{table}[!t]
\renewcommand{\arraystretch}{1.15}
\caption{Sign screening across $15$ training folds ($3$ protocols $\times$ $5$
folds) per configuration. ``agree'' counts folds where the screened sign matches
the textbook prior; ``ident.'' counts folds whose $95\%$ clustered interval
excludes zero; ``$n_+$'' counts folds with a positive screened sign; ``eff.'' is
the mean partial effect on $E[y]$. Textbook signs are
$\textit{temp}{+}$, $\textit{rh}{-}$, $\textit{wind}{+}$, $\textit{precip}{-}$,
$\textit{solar}{+}$.}
\label{tab:signs}
\centering
\footnotesize
\begin{tabular}{@{}llcccr@{}}
\toprule
Config & Driver & agree & ident. & $n_+$ & eff. \\
\midrule
\multirow{4}{*}{\textsf{ctx} (headline)}
 & \textit{temp}   & 15/15 & 5/15  & 15/15 & $+0.00126$ \\
 & \textit{rh}     & 15/15 & 6/15  & 0/15  & $-0.00129$ \\
 & \textit{wind}   & 14/15 & 0/15  & 14/15 & $+0.00146$ \\
 & \textit{precip} & 9/15  & 2/15  & 6/15  & $-0.00028$ \\
\midrule
\multirow{4}{*}{\textsf{b1} ($1^\circ$)}
 & \textit{temp}   & 15/15 & 9/15  & 15/15 & $+0.00129$ \\
 & \textit{rh}     & 15/15 & 8/15  & 0/15  & $-0.00130$ \\
 & \textit{wind}   & 14/15 & 3/15  & 14/15 & $+0.00153$ \\
 & \textit{precip} & 9/15  & 4/15  & 6/15  & $-0.00048$ \\
\midrule
\multirow{4}{*}{\textsf{b5} ($5^\circ$)}
 & \textit{temp}   & 15/15 & 4/15  & 15/15 & $+0.00124$ \\
 & \textit{rh}     & 14/15 & 2/15  & 1/15  & $-0.00128$ \\
 & \textit{wind}   & 14/15 & 0/15  & 14/15 & $+0.00135$ \\
 & \textit{precip} & 8/15  & 2/15  & 7/15  & $-0.00022$ \\
\midrule
\multirow{4}{*}{\textsf{b10} ($10^\circ$)}
 & \textit{temp}   & 15/15 & 6/15  & 15/15 & $+0.00121$ \\
 & \textit{rh}     & 15/15 & 2/15  & 0/15  & $-0.00133$ \\
 & \textit{wind}   & 15/15 & 0/15  & 15/15 & $+0.00123$ \\
 & \textit{precip} & 8/15  & 1/15  & 7/15  & $-0.00032$ \\
\midrule
\multirow{5}{*}{\textsf{drv} ($|S|{=}5$)}
 & \textit{temp}   & 15/15 & 2/15  & 15/15 & $+0.00049$ \\
 & \textit{rh}     & 15/15 & 1/15  & 0/15  & $-0.00085$ \\
 & \textit{wind}   & 14/15 & 1/15  & 14/15 & $+0.00105$ \\
 & \textit{precip} & 8/15  & 1/15  & 7/15  & $+0.00013$ \\
 & \textit{solar}  & 15/15 & 12/15 & 15/15 & $+0.00166$ \\
\midrule
\multirow{4}{*}{\textsf{k5} ($K{=}4$)}
 & \textit{temp}   & 14/15 & 6/15  & 14/15 & $+0.00336$ \\
 & \textit{rh}     & 15/15 & 12/15 & 0/15  & $-0.00274$ \\
 & \textit{wind}   & 15/15 & 13/15 & 15/15 & $+0.00501$ \\
 & \textit{precip} & 15/15 & 15/15 & 0/15  & $-0.01144$ \\
\bottomrule
\end{tabular}
\end{table}

\subsection{Compliance is not price, empirically}
\label{sec:compliance}
Constrained arms satisfy their own declaration exactly, $\VR_{\text{own}}=0.0000$
in every arm, protocol and configuration; the harness asserts this and aborts
otherwise, so it is a plumbing test rather than a result
(Table~\ref{tab:main}, ninth column). The informative comparison is between the
unconstrained arms' violation rates and the measured prices.

The unconstrained \MC{} model violates the textbook prior on the drivers at rate
$0.4795$ (\textsf{iid}), $0.4799$ (\textsf{point}) and $0.4825$
(\textsf{spatial}), computed over $40\,873$--$45\,071$ active partial-effect
evaluations; against the screened prior the rates are $0.4957$, $0.4868$ and
$0.4968$. Per driver under blocking they are $0.432$ (\textit{temp}), $0.492$
(\textit{rh}), $0.548$ (\textit{wind}) and $0.463$ (\textit{precip}), i.e.\
indistinguishable from the $0.5$ of a coin flip (Fig.~\ref{fig:prior}). The
measured price of enforcing that same prior is $0.0143$--$0.0555$. A rate of
$0.48$ would ordinarily be reported as ``the model has not learned the physics'';
a price of $0.02$ would ordinarily be reported as ``the physics is free.'' Both
statements hold simultaneously here, exactly as Proposition~\ref{prop:vr}
predicts, because the drivers are largely redundant given coordinates: the sign of
a partial effect is unconstrained wherever the effect does not matter.

The one place where the rates are informative is the comparison across
declarations. \MONOsc{} violates the \emph{textbook} prior at $0.2068$
(\textsf{iid}) and $0.2650$ (\textsf{spatial}) --- it must, since its own
declaration differs from the textbook one in $6$ of $15$ folds --- while
\MONOnv{} violates the \emph{screened} prior at $0.1572$ and $0.1904$. \PHYSsc{},
which has no shield to hide behind, disagrees with the textbook prior most
strongly of all, at $0.3855$ and $0.3462$. Those numbers quantify how far apart
the two priors are in deployment, and they are the only $\VR$ figures in this
paper that are comparable across arms.

Fig.~\ref{fig:leak} additionally distinguishes the \emph{certificate} from the
\emph{deployed pipeline}. When the auxiliary channel is admitted the residualizer
sits upstream of the constrained learner, so a nonzero $\VR_{\text{e2e}}$ against
the declared prior would reveal that the certificate does not survive
preprocessing. \textbf{No such leakage was observed}: $\VR_{\text{own}}=0.0000$
end-to-end for all three \textsf{MIB} arms in all three protocols of all five
gate-open configurations. Residualizing the channel against a quadratic design in
$D$ does not reintroduce a sign violation on $D$.

\begin{figure}[!t]
\centering
\includegraphics[width=\columnwidth]{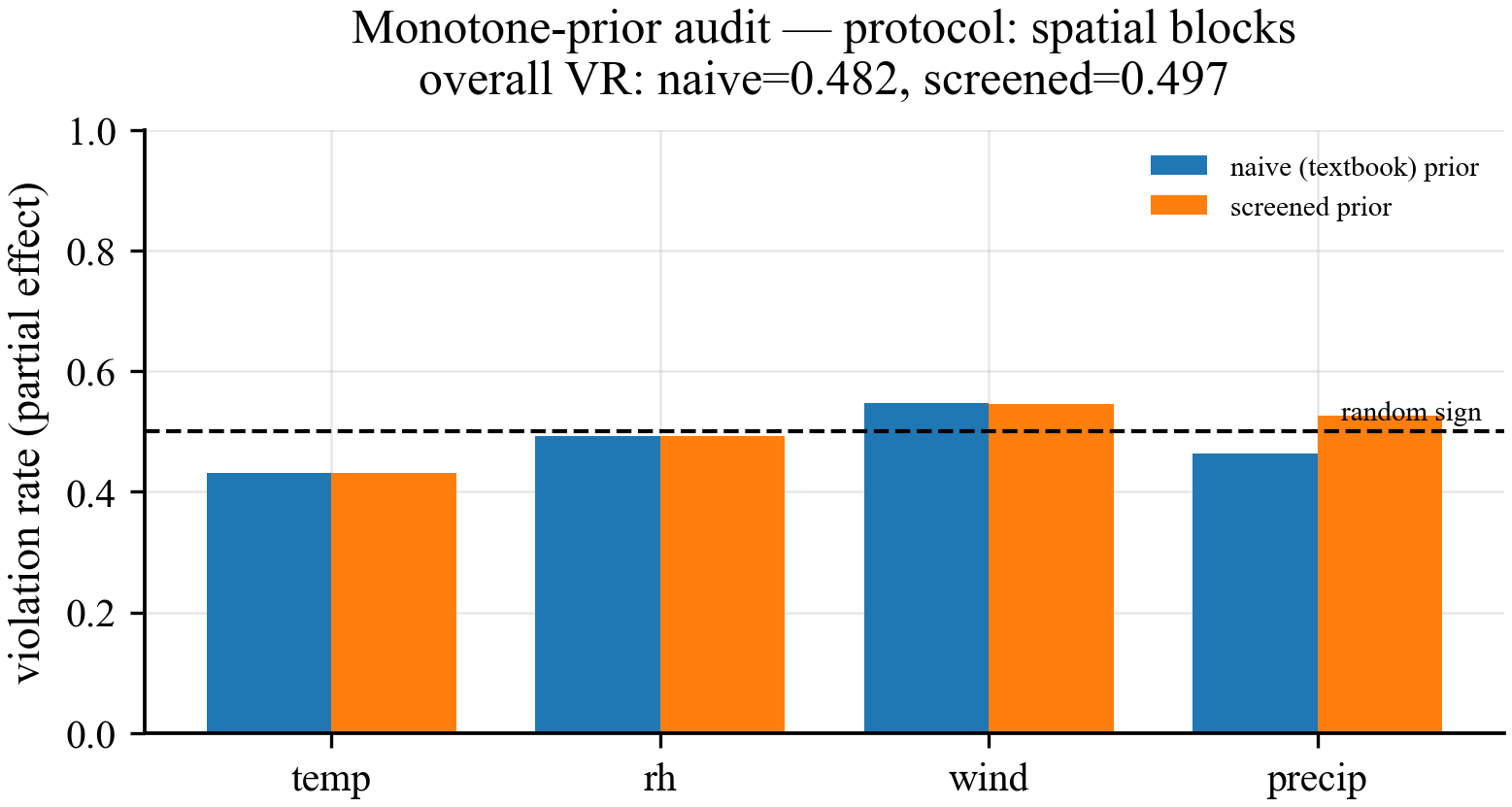}
\caption{Violation rate of the \emph{unconstrained} \MC{} model against the
textbook and the screened prior, per driver, under $2^\circ$ spatial blocking
(overall $0.4825$ naive, $0.4968$ screened, over $45\,071$ active evaluations).
Every bar sits near the dashed $0.5$ line, indicating a partial effect whose sign
is essentially arbitrary in the unconstrained fit. The only visible gap is
\textit{precip}, whose screened rate ($0.527$) exceeds its naive rate ($0.463$)
because the screened sign flips in some folds. By Proposition~\ref{prop:vr} these
rates carry no implication for the price of enforcing the prior, which is
$0.0179$ in the same cell.}
\label{fig:prior}
\end{figure}

\begin{figure}[!t]
\centering
\includegraphics[width=\columnwidth]{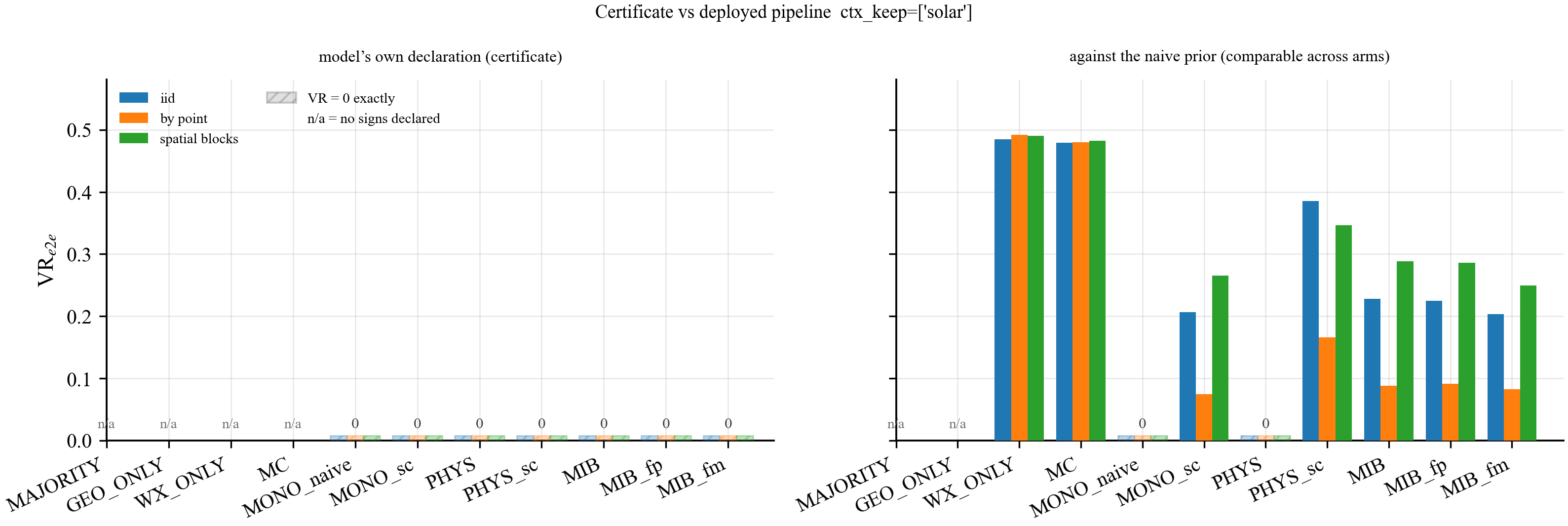}
\caption{Certificate versus deployed pipeline, headline configuration
($c=$ insolation admitted). Left: violations of each model's own declared sign
pattern, evaluated end-to-end through the residualizer. \emph{Every bar in this
panel is exactly $0.0000$} --- the constrained arms are feasible by construction
of the hard split constraints and remain feasible after preprocessing --- and the
four unconstrained arms declare nothing, hence \textsf{n/a}. The panel is
therefore flat by design; it is a passed plumbing test, not a missing plot.
Right: violations against the common textbook prior, which is comparable across
all arms. \MONOnv{} and \PHYSnv{} read $0.0000$ there because the textbook prior
\emph{is} their own declaration.}
\label{fig:leak}
\end{figure}

\subsection{The auxiliary channel: gate, and bounded sign sensitivity}
\label{sec:ctxarm}
Of the five candidate auxiliary channels the harness looks for, only insolation is
present in this corpus; dewpoint, apparent temperature, pressure and cloud cover
are absent from the join, and calendar harmonics are unavailable because no
parsable date column exists. Insolation passes the exogeneity gate decisively
($R^2=0.2646$ against the threshold $0.90$; residual relative standard deviation
$0.8576$ against the threshold $0.10$) and was admitted in all $15$ training folds
of every gate-auto configuration. Two regimes must then be distinguished, and only
one of them is informative.

In the gate-closed regime (\textsf{closed}, and \textsf{drv} where insolation has
been promoted to a driver so no candidate remains), \MIB{}, \MIBfp{} and \MIBfm{}
are \emph{identical} to \MONOsc{}, and the sign-sensitivity spread is exactly
$0.0000$ in all three protocols. This is the predicted degeneracy of
Lemma~\ref{lem:degen}, verified bytewise on out-of-fold predictions by check C5;
it is a successful test of the implementation and simultaneously a null result
about the auxiliary channel.

In the gate-open regime, Corollary~\ref{cor:spread} predicts that the spread over
the three sign choices cannot exceed $\Fone(\MIB)-\Fone(\MONOsc)$.
Table~\ref{tab:ctx} reports both quantities. The bound holds in $8$ of the $15$
gate-open cells and is \emph{violated} in $7$, by up to $0.0295$. The violations
are not mysterious: each one decomposes exactly into the two elementary nesting
inversions of Proposition~\ref{prop:ceilings} that produce it,
\begin{align*}
\text{excess}\;=\;
&\underbrace{\bigl(\Fone(\MIBfp)-\Fone(\MIB)\bigr)_+}_{\text{R4}}\\
+\;&\underbrace{\bigl(\Fone(\MONOsc)-\Fone(\MIBfm)\bigr)_+}_{\text{R3}},
\end{align*}
to four decimal places in all seven cells --- for instance
$0.0295=0.0199+0.0096$ at \textsf{b1}/\textsf{spatial} and
$0.0254=0.0034+0.0220$ at \textsf{ctx}/\textsf{spatial}. Each summand is an
inversion of a certified ordering and is therefore already accounted for in
$\hat s$; the composite excess must not be added to $\hat s$ separately, on pain
of double counting. Interpreted through Corollary~\ref{cor:slack}, the finding is
that the sign imposed on an auxiliary channel changes the outcome by more than the
channel is worth, which is only possible because the pipeline's resolution is
coarser than the channel's value: $\Fone(\MIB)-\Fone(\MONOsc)=0.0045$ under
\textsf{iid} and $0.0275$ under blocking, against $\hat s=0.0220$. The honest
conclusion about the auxiliary channel in this corpus is that its contribution is
at or below the noise floor, and that its sign therefore cannot be chosen on
accuracy grounds.

\begin{table}[!t]
\renewcommand{\arraystretch}{1.15}
\caption{Auxiliary-channel arms in the five gate-open configurations. ``spread''
is $\max-\min$ over $\{\MIB,\MIBfp,\MIBfm\}$; ``bound'' is
$\Fone(\MIB)-\Fone(\MONOsc)$ as required by \eqref{eq:ctxspread}. Cells where the
spread exceeds the bound are marked $\times$; the excess equals the sum of the
R4 and R3 inversions listed in Table~\ref{tab:slack}. Gate-closed configurations
(\textsf{closed}, \textsf{drv}) have spread $=$ bound $=0.0000$ exactly and are
omitted.}
\label{tab:ctx}
\centering
\footnotesize
\begin{tabular}{@{}llccc@{}}
\toprule
Config & Protocol & spread & bound & excess \\
\midrule
\textsf{ctx}  & \textsf{iid}     & 0.0160 & 0.0045 & $\times$ 0.0114 \\
\textsf{ctx}  & \textsf{point}   & 0.0280 & 0.0315 & --- \\
\textsf{ctx}  & \textsf{spatial} & 0.0529 & 0.0275 & $\times$ 0.0254 \\
\textsf{b1}   & \textsf{iid}     & 0.0160 & 0.0045 & $\times$ 0.0114 \\
\textsf{b1}   & \textsf{point}   & 0.0280 & 0.0315 & --- \\
\textsf{b1}   & \textsf{spatial} & 0.0399 & 0.0104 & $\times$ 0.0295 \\
\textsf{b5}   & \textsf{iid}     & 0.0160 & 0.0045 & $\times$ 0.0114 \\
\textsf{b5}   & \textsf{point}   & 0.0280 & 0.0315 & --- \\
\textsf{b5}   & \textsf{spatial} & 0.0331 & 0.0071 & $\times$ 0.0260 \\
\textsf{b10}  & \textsf{iid}     & 0.0160 & 0.0045 & $\times$ 0.0114 \\
\textsf{b10}  & \textsf{point}   & 0.0280 & 0.0315 & --- \\
\textsf{b10}  & \textsf{spatial} & 0.0077 & 0.0131 & --- \\
\textsf{k5}   & \textsf{iid}     & 0.0164 & 0.0188 & --- \\
\textsf{k5}   & \textsf{point}   & 0.0261 & 0.0328 & --- \\
\textsf{k5}   & \textsf{spatial} & 0.0077 & 0.0236 & --- \\
\bottomrule
\end{tabular}
\end{table}

\subsection{Calibrating the slack}
\label{sec:slack}
We enumerate every certified comparison of Proposition~\ref{prop:ceilings} over
the seven numerically distinct configurations and three protocols: $87$ distinct
R1 tests (after collapsing the $18$ alias cells of the gate-closed runs), $42$ R2,
$63$ R3, $42$ R4 and $84$ R5, for $318$ orderings that the pipeline is obliged to
satisfy. Eighteen are inverted; Table~\ref{tab:slack} lists all of them. Hence
\begin{equation}
\hat s = 0.0220\ \text{macro-}\Fone,
\end{equation}
attained at \textsf{ctx}/\textsf{spatial}/$\MIBfm$ versus $\MONOsc$, i.e.\
\emph{in the headline configuration itself}. Restricting to ablation-dominance
pairs alone gives the narrower floor $\hat s_{\mathrm{abl}}=0.0154$, at
\textsf{b5}/\textsf{spatial}/$\MIBfm$ versus $\GEO$. No R2 inversion occurred:
the loosened majority-constant ceiling of Remark~\ref{rem:cons} was never
approached, with margins of $+0.0307$ to $+0.1150$.

For comparison, the fold standard deviation at the maximizing cell is $0.0516$
and the largest anywhere in the grid is $0.0889$
(\textsf{drv}/\textsf{spatial}/$\MONOnv$); the widest cluster bootstrap
half-width among the constrained-arm dominance intervals is $0.0793$
(\textsf{b5}/\textsf{spatial}/$\MONOsc$ versus $\GEO$). So
$\hat s$ is $0.43\times$ the local fold spread and $0.28\times$ the widest
interval --- but $0.87\times$ the half-width of the very quantity it invalidates,
the spatial \MONOsc{} price ($0.0254$). The point of $\hat s$ is not that it is
larger than a bootstrap interval; it is that it is derived from an ordering the
experiment must satisfy, so it survives in cells where the bootstrap is silent.
The most striking illustration is inversion \#17--\#18 in
Table~\ref{tab:slack}: at $10^\circ$ blocks the constrained \MONOnv{} beats its
own unconstrained superclass \MC{} by $0.0202$ with interval
$[-0.0128,+0.0529]$ --- a violation the bootstrap cannot see, in a cell where
$\hat\head=0.0050$ and nothing was measurable to begin with.

Consequences, stated as Corollary~\ref{cor:slack} prescribes:
\begin{enumerate}
\item \textbf{Reporting floor.} Four of the twenty-one price cells in
Table~\ref{tab:main} satisfy $|\hat\price|<\hat s$ and carry no interpretation:
\MONOnv{} ($0.0143$), \MONOsc{} ($0.0179$), \MIB{} ($-0.0096$) and \MIBfp{}
($-0.0130$), all under spatial blocking. All four would ordinarily be reported as
``the constraint is nearly free,'' and two of them as ``the constraint helps.''
Across the grid the same test disqualifies the entire spatial rung of \textsf{b5}
and \textsf{b10}.
\item \textbf{A priori futility.} Two of the twenty-one
(configuration $\times$ protocol) cells have $\hat\head<\hat s$:
\textsf{b5}/\textsf{spatial} ($\hat\head=0.0184$) and
\textsf{b10}/\textsf{spatial} ($\hat\head=0.0050$). Their ten shielded-arm price
cells were uninformative \emph{before any constrained model was trained}, so
Algorithm~\ref{alg:screen} would have terminated them after two unconstrained
fits. A third, \textsf{ctx}/\textsf{spatial} with $\hat\head=0.0427$, clears the
floor by only a factor of $1.9$.
\end{enumerate}

Two clarifications on interpretation. First, a negative measured price is not by
itself an error: hard constraints act as regularizers and can improve
generalization, and the fitted arm is not the minimizer of the out-of-fold
statistic over its class, so the population ordering $\price\ge0$ does not
transfer to estimates. The largest such effect in our grid,
\textsf{b10}/\textsf{spatial}/$\MIB$ beating \MC{} by $0.0270$ with interval
$[+0.0071,+0.0496]$ and $p=0.003$, is excluded from \eqref{eq:slack} anyway
because those two arms are not feature-matched (Remark~\ref{rem:notmatched}).
Second, and symmetrically, every inversion we \emph{do} count admits the same
regularization reading --- the ablated class is smaller still, so it too can win
by variance reduction. That is precisely why $\hat s$ should be read as a
resolution, i.e.\ the scale below which the pipeline cannot separate a genuine
excess-risk gap from an estimation artefact, rather than as a bug count.

\begin{table*}[!t]
\renewcommand{\arraystretch}{1.15}
\caption{Complete slack ledger: all $18$ inverted orderings among the $318$
certified comparisons of the grid. ``relation'' refers to
Proposition~\ref{prop:ceilings}. $\hat s=0.0220$ is the maximum of the last
column. Intervals are the paired $95\%$ cluster bootstrap difference of the
subclass against the superclass where the harness instruments that reference;
R3 and R4 are not instrumented against \MIB{} in the emitted artefacts and are
marked ---. Configurations \textsf{b2} and \textsf{glob} reproduce \textsf{ctx}
bytewise and would triple rows \#1, \#5, \#9, \#10; they are not listed.}
\label{tab:slack}
\centering
\footnotesize
\begin{tabular}{@{}rlllcccl@{}}
\toprule
\# & Config & Protocol & Comparison (subclass $\preceq$ superclass) & relation &
$\Fone$ sub & $\Fone$ super & inversion \\
\midrule
1  & \textsf{ctx}  & \textsf{iid}     & $\MIBfm \preceq \MONOsc$ & R3 & 0.7809 & 0.7843 & 0.0034 \\
2  & \textsf{b1}   & \textsf{iid}     & $\MIBfm \preceq \MONOsc$ & R3 & 0.7809 & 0.7843 & 0.0034 \\
3  & \textsf{b5}   & \textsf{iid}     & $\MIBfm \preceq \MONOsc$ & R3 & 0.7809 & 0.7843 & 0.0034 \\
4  & \textsf{b10}  & \textsf{iid}     & $\MIBfm \preceq \MONOsc$ & R3 & 0.7809 & 0.7843 & 0.0034 \\
5  & \textsf{ctx}  & \textsf{iid}     & $\MIBfp \preceq \MIB$    & R4 & 0.7968 & 0.7888 & 0.0080 \\
6  & \textsf{b1}   & \textsf{iid}     & $\MIBfp \preceq \MIB$    & R4 & 0.7968 & 0.7888 & 0.0080 \\
7  & \textsf{b5}   & \textsf{iid}     & $\MIBfp \preceq \MIB$    & R4 & 0.7968 & 0.7888 & 0.0080 \\
8  & \textsf{b10}  & \textsf{iid}     & $\MIBfp \preceq \MIB$    & R4 & 0.7968 & 0.7888 & 0.0080 \\
9  & \textsf{ctx}  & \textsf{spatial} & $\MIBfm \preceq \MONOsc$ & R3 & 0.5630 & 0.5850 & \textbf{0.0220} \\
10 & \textsf{ctx}  & \textsf{spatial} & $\MIBfp \preceq \MIB$    & R4 & 0.6159 & 0.6125 & 0.0034 \\
11 & \textsf{b1}   & \textsf{spatial} & $\MIBfm \preceq \MONOsc$ & R3 & 0.5982 & 0.6078 & 0.0096 \\
12 & \textsf{b1}   & \textsf{spatial} & $\MIBfp \preceq \MIB$    & R4 & 0.6381 & 0.6182 & 0.0199 \\
13 & \textsf{b5}   & \textsf{spatial} & $\MONOsc \preceq \GEO$   & R1 & 0.5625 & 0.5670 & 0.0045 \\
14 & \textsf{b5}   & \textsf{spatial} & $\MIBfm \preceq \GEO$    & R1 & 0.5516 & 0.5670 & 0.0154 \\
15 & \textsf{b5}   & \textsf{spatial} & $\MIBfm \preceq \MONOsc$ & R3 & 0.5516 & 0.5625 & 0.0109 \\
16 & \textsf{b5}   & \textsf{spatial} & $\MIBfp \preceq \MIB$    & R4 & 0.5847 & 0.5696 & 0.0151 \\
17 & \textsf{b10}  & \textsf{spatial} & $\MONOnv \preceq \MC$    & R5 & 0.5419 & 0.5218 & 0.0202 \\
18 & \textsf{b10}  & \textsf{spatial} & $\MONOsc \preceq \MC$    & R5 & 0.5357 & 0.5218 & 0.0139 \\
\bottomrule
\end{tabular}
\end{table*}

\subsection{The screening-leak probe returns a null}
\label{sec:leak}
The \textsf{glob} configuration differs from the headline run only in deciding the
exogeneity gate on the full dataset rather than on each training fold. The
resulting inflation is
\[
\Delta\Fone(\MONOsc)=+0.0000,\qquad \Delta\Fone(\MIB)=+0.0000,
\]
exactly, in all three protocols; \textsf{glob} reproduces \textsf{ctx} bytewise on
every arm. Two reasons combine. First, the gate is unanimous: insolation was
admitted in $15/15$ training folds, so pooling the gate decision changes nothing.
Second, in this implementation the pooled-screening flag governs \emph{only} the
gate; the sign pattern $\sigma^{\mathrm{sc}}$ is screened per training fold in
both arms by construction. The probe therefore covers leakage in channel
\emph{admission} only, and not the more consequential error of screening the sign
pattern itself on pooled data, which this design forecloses rather than measures.

The \textsf{b2} configuration, identical to \textsf{ctx} by construction,
likewise reproduces it bytewise on all $33$ model-protocol cells, confirming that
the harness is deterministic under seed $0$ and that no result in this paper
depends on run-to-run variation in fitting.

\section{Discussion}
\subsection{What should be reported}
Any claim about the accuracy cost of a domain constraint should be accompanied by
the triple $(\hat\price,\hat\head,\hat s)$ and an explicit statement of the free
feature set. The first number alone is uninterpretable: our headline price of
$0.0473$ becomes $0.3470$ when the shield is removed and $-0.0139$ when the
blocks are coarsened, with the physics held fixed. The second bounds it and is
nearly free to compute, being the standard ablation. The third says whether either
is above the noise; in our grid it disqualifies four of twenty-one headline price
cells and ten cells before training. For hybrid architectures with a fixed mixing
weight between learned and prescribed components~\cite{kriuk2026permafrost},
$\head$ should be computed with respect to the covariates entering the prescribed
component. For models with learnable physical
parameters~\cite{kriuk2026poseidon}, the relevant $S$ is the set of covariates
whose functional form the physical law pins down. In both cases the ablation is
already implemented; only the comparison is missing.

Two secondary recommendations follow from findings we did not anticipate. First,
report the granularity of the bootstrap clustering alongside any claim that a
driver's effect sign is ``identified,'' since refining our blocks from $10^\circ$
to $1^\circ$ moves temperature from $6/15$ to $9/15$ identified folds without
touching the data. Second, report the label resolution, since moving from $K=3$ to
$K=4$ raised every driver's mean partial effect and every normalized price while
leaving the ceiling essentially fixed --- the prior became measurably more costly
purely because the target became finer.

\subsection{On reporting our own negative cells}
Four of our twenty-one headline price cells fall below $\hat s$ and are reported as
uninterpretable rather than as evidence that the prior is cheap. This is not a
weakness of the instantiation but the intended use of Corollary~\ref{cor:slack}: a
screen that never fires on its author's own grid provides no evidence that it fires
at all. The same applies to the seven gate-open cells in which the spread exceeds
its bound (Table~\ref{tab:ctx}): each excess decomposes exactly into two
elementary inversions already counted in $\hat s$, which is what a
resolution-limited measurement of a small quantity is expected to look like.

\subsection{Why constraints can still be worth imposing}
Nothing above argues against shape constraints. Theorem~\ref{thm:main} caps the
in-distribution price and says nothing about extrapolation --- where the
$S$-blind surrogate that made the constraint look cheap is precisely the component
that fails, since a coordinate-based shield cannot transfer to unseen
territory~\cite{meyer2021}. Our own numbers make this concrete: the arm that
recovers $92.9\%$ of full-model accuracy under spatial blocking does so using
latitude and longitude, which carry no mechanism whatsoever and cannot be
evaluated outside the sampled domain. That is exactly the motivation for
prescribing a physical sensitivity in the permafrost
setting~\cite{kriuk2026permafrost}: the prescribed term is not there to improve
interpolation. Monotonicity additionally buys auditability and satisfies
requirements no accuracy metric encodes~\cite{wang2020deontological}; our
constrained arms carry a certificate that survives the deployed residualizer
exactly ($\VR_{\text{own}}=0$ end-to-end), which is a property of the artefact,
not of its macro-$\Fone$. Our claim is narrower: \emph{in-distribution accuracy
cost is the wrong instrument for evaluating a physical prior}, because its dynamic
range is set by the ablation gap rather than by the prior.

\section{Conclusion}
A shape constraint on a feature can never cost more than the feature is worth,
because ignoring the feature is always an admissible way to satisfy the
constraint. Operationally: \emph{a constrained model must never be beaten by its
own ablation}. This elementary bound turns the widely reported ``cost of physics''
into a quantity whose scale is fixed by the ablation gap, and hence by the free
feature set and the validation protocol rather than by the physics. On an ordinal
wildfire-severity task the same monotone prior costs $0.0473$ macro-$\Fone$ with
coordinates left free and $0.3470$ without them; the certified ceiling that bounds
both falls from $0.1288$ to $0.0050$ as the spatial blocks coarsen, taking every
price with it; the unconstrained model violates the prior at rate $0.48$ while
enforcing it costs $0.02$; and the pipeline's own inversions of orderings it is
obliged to satisfy calibrate a resolution of $\hat s=0.0220$ that renders four of
twenty-one headline price cells uninterpretable and ten cells of the grid
unidentifiable before a single constrained model is trained. We recommend
replacing the single reported cost with the triple (price, ceiling, slack) plus
the free feature set, and screening constraint-cost experiments with two
unconstrained fits before training any constrained model.


\end{document}